\tikzset{%
    my line width/.style={line width=.5pt},
    state common/.style={draw, my line width, rectangle, rounded corners,
        align=center},
    goal/.style={double, double distance=.8pt},
    trans/.style={-{>[length=3pt]}, my line width},
    emph trans/.style={trans, -{>[length=4pt]}, line width=1.5pt},
    trans label/.style={fill=white, circle, inner sep=0pt},
    to goal/.style={shorten >=.6pt},
}
\setlist[enumerate]{label=(\alph*)}
\title{A Formalism for Optimal Search with Dynamic Heuristics (Extended Version)}
\author{
    Remo Christen,
    Florian Pommerening,
    Clemens B\"uchner,
    Malte Helmert
}
\begin{document}

\maketitle
\begin{abstract}
While most heuristics studied in heuristic search depend only on the state, some
accumulate information during search and thus also depend on the search history.
Multiple existing approaches use such dynamic heuristics in $\astar$-like
algorithms and appeal to classic results for $\astar$ to show that they return
optimal solutions. However, doing so disregards the intricacies of searching
with a mutable heuristic. We treat dynamic heuristics formally and propose a
framework that defines how the information dynamic heuristics rely on can be
modified. We use these transformations in a generic search algorithm and an
instantiation that models $\astar$ with dynamic heuristics, allowing us to
provide general conditions for optimality. We show that existing approaches fit
our framework and apply our results. Doing so for future applications of dynamic
heuristics may simplify formal arguments for optimality.

\end{abstract}
\section{Introduction}

Heuristic search with $\astar$ is a canonical approach to finding optimal
solutions in transition systems. Heuristic functions evaluate states to guide
the search and typically map states to numeric values. In this work, we consider
the more general class of \emph{dynamic heuristics} that additionally depend on
information procured during search
\citep[\eg{},][]{gelperin-aij1977,mero-aij1984,koyfman-et-al-aamas2024}.

Classic optimality results for $\astar$
\citep[\eg{},][]{hart-et-al-ieeessc1968, dechter-pearl-jacm1985} cannot be
directly applied to search with such dynamic heuristics, as their proofs assume
static heuristics.
Nevertheless, existing approaches to classical planning that use dynamic
heuristics, such as LM-$\astar$ \citep{karpas-domshlak-ijcai2009}, LTL-$\astar$
\citep{simon-roeger-socs2015}, or online abstraction refinement
\citep{eifler-fickert-socs2018, franco-torralba-socs2019}, are claimed to return
optimal solutions by referring to static notions of admissibility and thereby
implicitly relying on classic results, without taking the interaction between
search and heuristic into account. This is further complicated by discussions of
additional properties such as monotonically increasing heuristic values and
$\astar$ with re-evaluation, a modification that re-inserts states popped from
the open list if their heuristic value improved while they where queued.
Connections to consistency and reopening are implied, but their impact on the
optimality of solutions is not formally proven.

\citet{koyfman-et-al-aamas2024} show more formally that their $\astar$-based
approach guarantees optimal solutions while using a dynamic heuristic. They
achieve this result by showing that their heuristic has a property called
path-dynamic admissibility (PDA). However, PDA has strong requirements about the
behavior of the search. Therefore the complexity inherent to the interaction
between a dynamic heuristic and the search must be addressed while showing that
the heuristic has the PDA property, making it difficult to translate the result
to other approaches.

We aim to prove these results in a more general fashion by first formalizing the
information that dynamic heuristics rely on as a mutable object together with
functions to transform that object. We then define a generic algorithm framework
for heuristic forward search that models when information can be transformed,
define an instantiation of the framework based on $\astar$, and show under what
conditions it guarantees optimal solutions. These conditions rely on extensions
of classic heuristic properties to the dynamic case that do not depend on search
behavior. Additionally, we show conditions where no reopening occurs in the algorithm.

Finally, we apply these results to classical planning approaches that use dynamic
heuristics, such as online abstraction refinement where abstractions are improved
while the search is running. We also briefly consider other search strategies that
do not use dynamic heuristics as such, but whose behavior can be modeled by one.
This includes ways to consider multiple heuristics
\citep[\eg{},][]{zhang-bacchus-aaai2012, domshlak-et-al-jair2012,
tolpin-et-al-ijcai2013}, pathmax \citep{mero-aij1984}, path-dependent $f$-values
\citep{dechter-pearl-jacm1985}, deferred evaluation \citep{helmert-jair2006},
and partial expansion $\astar$ \citep{yoshizumi-et-al-aaai2000}.

\section{Transition Systems}

We consider transition systems $\Tsystem = \tuple{\Tstates, \Tlabels,
\Tcost, \Ttransitions, \Tinit, \Tgoal}$, where $\Tstates$ is a finite set of
\emph{states}; $\Tlabels$ is a finite set of \emph{labels}; $\Tcost\colon \Tlabels \to
\nonnegreals$ is a \emph{cost function} assigning each label a cost; the set
$\Ttransitions \subseteq \Tstates \times \Tlabels \times \Tstates$ contains
labeled \emph{transitions} $\tuple{s, \ell, s'}$; $\Tinit \in \Tstates$ is the
\emph{initial state}; and $\Tgoal \subseteq \Tstates$ are \emph{goal
states}. For a transition $\tuple{s, \ell, s'} \in \Ttransitions$, we call $s$
the \emph{origin}, $\ell$ the \emph{label}, and $s'$ the \emph{target} of the
transition, and call $s'$ a \emph{successor} of $s$.

A \emph{path} from $s$ to $s'$ is a sequence of transitions $\pi = \tuple{t_1,
\dots t_n}$ where the origin of $t_1$ is $s$, the origin of $t_{i+1}$ is the
target of $t_i$ for $1\le i<n$, and the target of $t_n$ is $s'$. A path to $s'$
is a path from $\Tinit$ to $s'$ and a \emph{solution} (for $s$) is a path (from~$s$) to some state in
$\Tgoal$. The cost of a path $\pi = \tuple{t_1, \dots t_n}$ with $t_i =
\tuple{s_i, \ell_i, s_i'}$ is $\Tcost(\pi) = \sum_{i=1}^n \Tcost(\ell_i)$. A
path (from $s$) to $s'$ is \emph{optimal}, if it has a minimal cost among all
paths (from $s$) to $s'$. We denote the cost of an optimal path to $s$ by
$\gstar(s)$. We say a state is \emph{reachable} if a path to it exists and that
$\Tsystem$ is \emph{solvable} if it has a solution.

We assume that the transition system is encoded in a compact form (\eg, a
planning task), where we can access the initial state, can generate the
successors of a given state, and can check whether a given state is a goal
state.

A (static) \emph{heuristic} is a function $h\colon \Tstates \to \nonnegreals
\cup \{\infty\}$. The \emph{perfect heuristic} $\hstar$ maps each state $s$ to the
cost of an optimal path from $s$ to a goal state or to $\infty$ if no such path
exists. A heuristic $h$ is \emph{admissible} if $h(s) \le \hstar(s)$ for all
$s\in \Tstates$, and \emph{consistent} if $h(s) \le \Tcost(\ell) + h(s')$ for all
$\tuple{s, \ell, s'} \in \Ttransitions$. A heuristic is \emph{safe} if $h(s) =
\infty$ implies $\hstar(s) = \infty$ and it is \emph{goal-aware} if $h(s) = 0$
for all goal states $s \in \Tgoal$.

\paragraph{Running Example}
In the following, we give intuitions for definitions and concepts based on the
transition system $\Tsystem = \tuple{\{A,B,C,D\}, \{x,y\}, \{x \mapsto 1, y
\mapsto 2\}, \Ttransitions, A, \{D\}}$ with $\Ttransitions = \{\tuple{A,x,B},
\tuple{A,y,C}, \tuple{B,y,C}, \tuple{C,x,D}\}$. Figure~\ref{fig:running-example}
visualizes $\Tsystem$. We also need the concept of \emph{landmarks} in our
examples \cite[e.g.,][]{hoffmann-et-al-jair2004}. In our context, a landmark for
a state $s$ is a label that occurs in every solution for $s$. For example, $x$
is a landmark for $C$, while $x$ and $y$ are landmarks for~$A$.

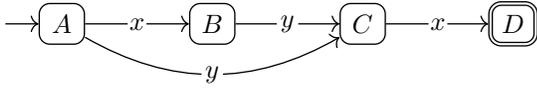
\begin{figure}
  \centering
  \begin{tikzpicture}
    \tikzset{%
        state/.style={state common, text width=3mm, inner sep=1.5mm},
    }
    \node[state] (a) at (0,0) {$A$};
    \node[state] (b) at (2,0) {$B$};
    \node[state] (c) at (4,0) {$C$};
    \node[state, goal] (d) at (6,0) {$D$};

    \draw[trans] (-7.5mm,0) to (a);
    \draw[trans] (a) to node[trans label] {$x$} (b);
    \draw[trans] (a) to[bend right] node[trans label] {$y$} (c);
    \draw[trans] (b) to node[trans label] {$y$} (c);
    \draw[trans, to goal] (c) to node[trans label] {$x$} (d);
\end{tikzpicture}
  \caption{Transition system of our running example.}
  \label{fig:running-example}
\end{figure}

\section{Dynamic Heuristics}

Dynamic heuristics depend on information gained while searching for a solution.
In our running example, this information is a partial function mapping states $s
\in \{A, B, C, D\}$ to sets of landmarks from $2^{\{x, y\}}$. We call the
set of all such functions the \emph{information space} of our landmark example.
In general, we consider dynamic heuristics to depend on information from an
arbitrary information space.

\citet{koyfman-et-al-aamas2024} introduced the notion of dynamic heuristics but
in their definition, the information can change arbitrarily between heuristic
evaluations. We extend their definition by considering more structured sources
of information where the possible modifications of information are limited.

\begin{definition}\label{def:info-src}
    An \emph{information source} $\src$ for a transition system with states
    $\Tstates$ and transitions $\Ttransitions$ consists of an information space
    $\infospace[\src]$, and the following constant and functions:
    \begin{align*}
        \Sinitinfo_{\src} &\in \infospace[\src]\text{,}\\
        \Supdate_{\src}&\colon \infospace[\src] \times \Ttransitions \to
            \infospace[\src]\text{ and} \\
        \Srefine_{\src}&\colon \infospace[\src] \times \Tstates \to
            \infospace[\src]\text{.}
    \end{align*}
\end{definition}

Information is initialized to $\Sinitinfo_{\src}$ and then modified with
$\Supdate_{\src}$ and $\Srefine_{\src}$. In our example, we could start in a
situation where we only know the landmarks of the initial state:
$\info_0 = \lminit = \{A \mapsto \{x,y\}\}$. This information can be updated by
considering transition $t = \tuple{A,y,C}$, thereby gaining information about
$C$: while $x$ must also be a landmark for $C$, the same is not true for $y$
because the label of $t$ is $y$. We can express this by $\info_u =
\lmupdate(\info_0, t) = \{A \mapsto \{x,y\}, C \mapsto \{x\}\}$. Alternatively,
we could refine $\info_0$, for example by computing landmarks for
$B$: $\info_r = \lmrefine(\info_0, B) = \{A \mapsto \{x,y\}, B \mapsto
\{x,y\}\}$. We consider expansion-based search algorithms, which can only refine
on known states and update along transitions starting in known states. Such an
algorithm can never refine $\info_0$ on $B$ without first updating on a
transition to $B$. The following definition formalizes this.

\begin{definition}\label{def:reachable-information}
    Let $\src$ be an information source for a transition system with states
    $\Tstates$ and transitions $\Ttransitions$. An information object $\info_n
    \in \infospace[\src]$ is called \emph{reachable} if there are $e_1, \dots,
    e_n \in \Tstates \cup \Ttransitions$ and $\info_0, \dots, \info_n \in
    \infospace[\src]$ such that $\info_0 = \Sinitinfo_{\src}$ and for all
    $0 < i \le n$
    \begin{itemize}
        \item if $e_i = s \in \Tstates$ then $\info_i =
            \Srefine_{\src}(\info_{i-1}, s)$,
        \item if $e_i = t \in \Ttransitions$ then $\info_i =
            \Supdate_{\src}(\info_{i-1}, t)$, and
        \item states $e_i \in \Tstates$ and the origins of transitions
            $e_i\in\Ttransitions$ are either $\Tinit$ or the target of a
            transition $e_j$ with $j < i$.
    \end{itemize}
\end{definition}

In our example above, $\info_0$ and $\info_u$ are reachable. For $\info_r$ the
sequence with $e_1 = B$ does not show that it is reachable because $B$ is not a
target of a transition earlier in the sequence. Information $\info_r$ could
still be reachable if the same information can be produced along a different
path, say first updating $\info_0$ along $\tuple{A, x, B}$ and then refining the
result on $B$. For the resulting information the sequence with $e_1 = \tuple{A,
x, B}$ and $e_2 = B$ shows reachability.

Dynamic heuristics are heuristics that depend on an information object in
addition to a state.

\begin{definition}
    A \emph{dynamic heuristic} over a transition system $\Tsystem$ with states
    $\Tstates$ depends on an information source $\src$ for $\Tsystem$ and
    is a function
    \[
        \hdyn\colon \Tstates \times \infospace[\src] \to \nonnegreals \cup
        \{\infty\}\text{.}
    \]
\end{definition}

In our landmark example, a dynamic heuristic could estimate the cost of each
state by the sum of the label cost of its known landmarks or by 0 if no landmarks are known.
For example, $\hlm(A, \info_0) = 3$ and $h(s, \info_0) =
0$ for all $s \in \{B,C,D\}$. After updating the information along $\tuple{A,y,C}$
the heuristic becomes more informed:
$\hlm(A, \info_1) = 3$, $\hlm(C, \info_1) = 1$, and $\hlm(B, \info_1) = \hlm(D,
\info_1) = 0$.

Static heuristics can be seen as the special case for a constant information
source, \ie{}, one where $\Supdate$ and $\Srefine$ do not modify the information.
Let us now define the dynamic counterparts of common static heuristic
properties.

\begin{definition}
    Let $\hdyn$ be a dynamic heuristic over a transition system with states
    $\Tstates$ and transitions $\Ttransitions$ and over information source
    $\src$.
    We say $\hdyn$ is
    \begin{description}[font=\normalfont]
        \item[\dynsafe] if $\hdyn(s, \info) = \infty$ implies $\hstar(s) =
            \infty$ for all $s \in \Tstates$ and all reachable $\info \in
            \infospace[\src]$;
        \item[\dynadmissible] if $\hdyn(s, \info) \leq \hstar(s)$ for all $s \in
            \Tstates$ and all reachable $\info \in \infospace[\src]$;
        \item[\dynconsistent] if $\hdyn(s, \info) \leq
            \Tcost(\ell) + \hdyn(s', \info)$ for all $\tuple{s,\ell,s'} \in \Ttransitions$ and all
            reachable $\info \in \infospace[\src]$; and
        \item[\dyngoalaware] if $\hdyn(s, \info) = 0$ for all $s \in
            \Tgoal$ and all reachable $\info \in \infospace[\src]$.
    \end{description}
\end{definition}

As in the static case, a \dyngoalaware{} and \dynconsistent{} heuristic is
\dynadmissible{}, and a \dynadmissible{} heuristic is \dynsafe{} and
\dyngoalaware{}. This can easily be shown by considering the static heuristics
that result from fixing the individual reachable $\info \in \infospace[\src]$.

If $\lmupdate$ and $\lmrefine$ guarantee that all stored labels are landmarks
for their states, then $\hlm$ is \dynadmissible\ (and thus \dynsafe\ and
\dyngoalaware). It is not \dynconsistent\ because for example
$\hlm(A,\info_0) = 3 > 2 + 0 = \Tcost(y) + \hlm(C,\info_0)$.

We also define a new property that describes that heuristic values can only
increase over time.

\begin{definition}
    A dynamic heuristic $\hdyn$ over a transition system with states $\Tstates$
    and transitions $\Ttransitions$ and over information source $\src$ is
    \emph{\dynmonotonic} if $\hdyn(s, \info) \leq \hdyn(s, \Supdate(\info, t))$
    and $\hdyn(s, \info) \leq \hdyn(s, \Srefine(\info, s'))$ for all reachable
    $\info \in \infospace[\src]$, all $s, s' \in \Tstates$, and all $t \in
    \Ttransitions$.
\end{definition}

If $\lmupdate$ and $\lmrefine$ guarantee that landmarks are never removed for
any state, then $\hlm$ is \dynmonotonic.

Monotonicity is particularly desirable for any \dynadmissible{} heuristic $h$ as
increasing values brings it closer to $\hstar$. Fortunately, we can easily
ensure monotonicity for $h$ by using $\hdyn'(s, \info) = \max(\hdyn(s, \info),
\hdyn(s, \info'))$ where $\info'$ is the information encountered so far that has
lead to the highest $h$-value for state $s$. Similar notions of monotonicity
have previously been described by \citet{eifler-fickert-socs2018} and
\citet{franco-torralba-socs2019}. Note that this monotonicity notion is
different from the monotonicity introduced by \citet{pohl-mi1977}, which is
equivalent to consistency \citep{pearl-1984}.

\subsection{Progression-Based Heuristics}

In our running example, information is only stored per state. While this is not
a general requirement of dynamic heuristics, it is an interesting special case
because it allows for simpler definitions, local reasoning, and, in practical
implementations, efficient storage of information. All our theoretical results
apply to dynamic heuristics in general but using this special case makes it more
natural to talk about several existing techniques like LTL trajectory
constraints~\cite{simon-roeger-socs2015} or landmark
progression~\citep{buechner-et-al-icaps2023}, which our running example is based
on.

We first define how information for a single state is modified on a low level
(Definition~\ref{def:progression-src}), and then use these transformations in
the definition of a special type of information source
(Definition~\ref{def:progression-based-info-src}).

\begin{definition}\label{def:progression-src}
    A \emph{progression source} $\inn$ consists of an information space
    $\infospace[\inn]$, and the following constant and functions:
    \begin{align*}
        \Sinitstateinfo_{\inn} &\in \infospace[\inn]\text{,}\\
        \Sprogress_{\inn}&\colon \infospace[\inn] \times \Ttransitions \to
            \infospace[\inn]\wand \\
        \Smerge_{\inn}&\colon \infospace[\inn] \times \infospace[\inn] \to
            \infospace[\inn]\text{.}
    \end{align*}
\end{definition}

For our running example we can define the progression source $\lmps$ where
$\lmpsspace = 2^{\{x, y\}}$ is the information space for a state and the initial
state $A$ is assigned $\lmpsinit = \{x, y\}$. The progression of a set of
landmarks $L$ for a state $s$ along a transition $t = \tuple{s, \ell, s'}$ are
the labels $\lmpsprogress(L, t) = L \setminus \{\ell\}$ because they are
guaranteed to be landmarks for $s'$ (while $\ell$ is not). If we get two sets of
landmarks $L_1, L_2$ for the same state, their union $\lmpsmerge(L_1, L_2) = L_1
\cup L_2$ is guaranteed to contain only landmarks for that state.

Beside our running example, we can also view two fundamental concepts in
search, namely \emph{$g$-values} and \emph{parent
pointers} of states, as a progression source.

\begin{definition}\label{def:parent-source}
    The \emph{parent source} $p$ is a progression source with $\infospace[p] =
    \nonnegreals \times (\Ttransitions \cup \{\myundef\})$ and
    \begin{align*}
        \Sinitstateinfo_p &= \tuple{0, \myundef}\text{,} \\
        \Sprogress_p(\tuple{g, t}, \tuple{s, \ell, s'}) &= \tuple{g + \Tcost(\ell), \tuple{s, \ell, s'}}\text{, and} \\
        \Smerge_p(\tuple{g, t}, \tuple{g', t'}) &=
            \begin{cases}
                \tuple{g, t} & \text{if } g \leq g' \\
                \tuple{g', t'} & \text{otherwise.}
            \end{cases}
    \end{align*}
\end{definition}

We now define information sources that update per-state information based on
this procedure of progressing and merging. This special case does not allow
refinement because we want it to be as restrictive as possible while still being
useful.

\begin{definition}\label{def:progression-based-info-src}
    Let $\inn$ be a progression source. The \emph{progression-based information
    source} $\src_\inn$ is an information source for a transition system with
    states $\Tstates$ and transitions $\Ttransitions$ where
    $\infospace[{\src_\inn}]$ is the set of partial functions  $\info\colon
    \Tstates \partialto \infospace[\inn]$, and
    \begin{align*}
        \Sinitinfo &= \{\Tinit \mapsto \Sinitstateinfo_{\inn}\}\\
        \Supdate(\info, t) &= \info_t
        \quad\text{for all $\info \in \infospace[\src_\inn]$ and $t \in \Ttransitions$}\\
        \Srefine(\info, s) &= \info
        \quad\text{for all $\info \in \infospace[\src_\inn]$ and $s \in \Tstates$}
    \end{align*}
    where $\info_t$ for a transition $t=\tuple{s, \ell, s'}$ is the same as
    $\info$ for all states other than $s'$ and maps $s'$ to
    \begin{align*}
        \begin{cases}
        \Sprogress(\info(s), t) \hspace{5em}\text{if $\info(s')$ undefined} \\
        \Smerge(\Sprogress(\info(s), t), \info(s')) \hspace{2.2em}\text{otherwise.}
        \end{cases}
    \end{align*}
\end{definition}

With this definition we get $\src_\lmps$, the progression-based information
source built on the progression source $\lmps$. Progressing the initial set of
landmarks $\{x,y\}$ along $\tuple{A, x, B}$ gives the landmarks $\{y\}$ for $B$.
Progressing this information along $\tuple{B, y, C}$ gives the empty set of
landmarks for $C$, but if we then also progress $\{x,y\}$ along $\tuple{A, y,
C}$ this gives new information $\{x\}$ for $C$ which has to be merged with the
existing information, so the information stored for $C$ after the progression is
$\emptyset \cup \{x\} = \{x\}$. The application section provides details on the
full landmark progression technique.

Similarly, we also get $\src_p$, a progression-based information source for
$g$-values and parent pointers. Note that for a reachable parent information
$\info$ with $\info(s) = \tuple{g, t}$, we can show by induction that following
parent pointers back to $\myundef$ yields a path to $s$ with cost of at most
$g$. This will be useful later.

Dynamic heuristics based on progression sources base the heuristic value of a
state on its stored information.

\begin{definition}
    A \emph{progression-based heuristic} is a dynamic heuristic with a
    progression-based information source $\src_\inn$ where $\hdyn(s, \info) = 0$
    if $\info(s)$ is undefined, and where $\info(s) = \info'(s)$ implies
    $\hdyn(s,\info) = h(s,\info')$ for all pairs $\info, \info' \in
    \infospace[{\src_\inn}]$ and all states $s \in \Tstates$.
\end{definition}

Because of the way progression-based information sources are updated,
progression-based heuristics are always $0$ for states not yet discovered in the
search, \ie{}, states $s$ where $\info(s)$ is undefined. These default values
can easily violate \dynconsistency{}
%
%
and we therefore consider a heuristic $\hdyn$ to be \emph{partially
\dynconsistent} if $\hdyn(s,\info) \leq \Tcost(\pi) + \hdyn(s',\info)$ for all
paths $\pi$ from $s$ to $s'$ for which $\info(s)$ and $\info(s')$ are defined.

\section{Dynamic Heuristic Search Framework}

A typical approach to finding solutions in a transition system is to explore it
sequentially starting from the initial state $\Tinit$. Algorithm~\ref{alg:main}
implements such a forward search and maintains information during the
exploration. Besides the search direction, Algorithm~\ref{alg:main} leaves
choices such as the expansion order open. This captures a wide range of
instantiations that may vary in how often they use, update or refine the
information within the search.

\begin{algorithm}[t]
\begin{algorithmic}[1]
\small
    \REQUIRE transition system $\tuple{\Tstates, \Tlabels, \Tcost,
    \Ttransitions, \Tinit, \Tgoal}$; set of information sources $\srcs$
    \ENSURE \solvable{} or \unsolvable{}
    \STATE $\infos \assign \{\src \mapsto \Sinitinfo_{\src} \mid \src \in \srcs\}$
        \label{lin:main-init-infos}
    \STATE $\Sknown \assign \{\Tinit\}$\label{lin:main-init-sknown}
    \LOOP
        \STATE choose applicable operation\label{lin:choose-op}
        \SWITCH{operation}
        \CASE{\genunknown}\label{lin:genunknown}
            \STATE choose $t = \tuple{s, \ell, s'} \in
                \Ttransitions$ such that $s \in \Sknown$ and\\
                \quad$s' \notin \Sknown$\label{lin:choose-t}
            \FORALL{$\src \in \srcs$}\label{lin:unknown-update-loop}
                \STATE $\infos(\src) \assign \Supdate_{\src}(\infos(\src), t)$
                \label{lin:unknown-update}
            \ENDFOR
            \STATE $\Sknown \assign \Sknown \cup \{s'\}$
        \ENDCASE
        \CASE{\genknown} \label{lin:genknown}
            \STATE choose $t = \tuple{s, \ell, s'} \in \Ttransitions$ such that
                $s,s' \in \Sknown$
            \FORALL{$\src \in \srcs$}\label{lin:known-update-loop}
                \STATE $\infos(\src) \assign \Supdate_{\src}(\infos(\src), t)$
                \label{lin:known-update}
            \ENDFOR
        \ENDCASE
        \CASE{\refine} \label{lin:refine}
            \STATE choose $s \in \Sknown$
            \FORALL{$\src \in \srcs$}
                \STATE $\infos(\src) \assign \Srefine_{\src}(\infos(\src), s)$
            \ENDFOR \label{lin:end-refine}
        \ENDCASE
        \CASE{\declsolvable} \label{lin:decsolvable}
            \STATE ensure that $\Sknown$ contains a goal
                state\label{lin:solvable-cond}
            \RETURN \solvable{}\label{lin:main-solvable}
        \ENDCASE
        \CASE{\declunsolvable} \label{lin:decunsolvable}
            \STATE ensure that \Sknown{} does not contain a goal
                state\label{lin:goal-not-known}
            \STATE ensure that there is no $\tuple{s, \ell, s'} \in \Ttransitions$
                such that\\
                \quad$s \in \Sknown$ and $s' \notin \Sknown$\label{lin:no-trans}
            \RETURN \unsolvable{} \label{lin:main-unsolvable}
        \ENDCASE
    \ENDLOOP
\end{algorithmic}
\caption{Dynamic Heuristic Search Framework}%
\label{alg:main}
\end{algorithm}

The algorithm maintains one information object $\infos[\src]$ for each source
$\src$, initializes it at the start, updates it when exploring transitions, and
optionally refines it for known states. By keeping track of known states in
$\Sknown$ it guarantees that exactly the reachable information according to
Definition~\ref{def:reachable-information} can occur in the algorithm. If at
some point a goal state is known, the problem is solvable. If not, and no new
states can be reached, then it is unsolvable. A simple induction shows that
$\Sknown$ contains reachable states.

\begin{lemma}\label{lem:reachable}
    Every state in $\Sknown$ in Algorithm~\ref{alg:main} is reachable.
\end{lemma}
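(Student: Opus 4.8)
The plan is to proceed by induction on the number of iterations of the main loop of Algorithm~\ref{alg:main}, maintaining the invariant that every state in $\Sknown$ is reachable in the sense of Definition~\ref{def:reachable-information} — that is, each such state is either $\Tinit$ or the target of some transition that could legitimately appear in a reachability-witnessing sequence whose origins are already known.

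First I would establish the base case: immediately after line~\ref{lin:main-init-sknown}, $\Sknown = \{\Tinit\}$, and $\Tinit$ is reachable trivially (it is the origin of the empty exploration by convention, or equivalently witnessed by the empty sequence $e_1,\dots,e_n$ with $n=0$). For the inductive step, assume the invariant holds at the start of an iteration and examine each operation chosen in line~\ref{lin:choose-op}. The only operation that modifies $\Sknown$ is \genunknown{} (lines~\ref{lin:genunknown}--\ref{lin:choose-t}): it adds a successor $s'$ of a transition $t = \tuple{s,\ell,s'}$ with $s \in \Sknown$ and $s' \notin \Sknown$. By the induction hypothesis $s$ is reachable, so there is a witnessing sequence $e_1,\dots,e_n$ ending in (or containing) the target $s$; appending $e_{n+1} = t$ extends this to a valid sequence per Definition~\ref{def:reachable-information}, since the origin $s$ of $t$ is already a target appearing earlier (or is $\Tinit$), and the new information object is obtained via $\Supdate$. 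Hence $s'$ is reachable, and the invariant is preserved. The operations \genknown{}, \refine{}, \declsolvable{}, and \declunsolvable{} leave $\Sknown$ unchanged, so the invariant is trivially maintained.

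I do not expect any real obstacle here; the lemma is essentially a bookkeeping fact. The one point requiring a little care is matching the bookkeeping in Algorithm~\ref{alg:main} (which tracks $\Sknown$ but not an explicit exploration sequence) with the sequence-based formulation of Definition~\ref{def:reachable-information}: one should be slightly careful that when we add $s'$ to $\Sknown$, the transition $t$ we append genuinely satisfies the third bullet of that definition, i.e. its origin $s$ is $\Tinit$ or the target of an earlier element of the witnessing sequence. This follows because the witnessing sequence for $s$ (given by the induction hypothesis) already has $s$ as $\Tinit$ or as a target of some $e_j$ within it, so appending $t$ is legitimate. It may also be cleanest to strengthen the induction hypothesis slightly to say that for every $s \in \Sknown$ there is a single witnessing sequence using only states and transitions whose origins are in $\Sknown$, so that the sequences for different known states can be combined consistently; but since Definition~\ref{def:reachable-information} only asks for existence of some sequence per state, the simpler per-state argument already suffices.
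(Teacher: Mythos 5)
Your proposal is correct and follows exactly the argument the paper has in mind --- the paper itself offers no written proof beyond the remark that ``a simple induction shows that $\Sknown$ contains reachable states,'' and your induction (base case $\Sknown = \{\Tinit\}$; only \genunknown{} adds a state, and it adds a successor $s'$ of a known state $s$) is that induction. One small correction: the lemma's ``reachable'' is the notion for \emph{states} from the Transition Systems section (a path from $\Tinit$ to the state exists), which is what the soundness theorem needs, not the reachability of \emph{information objects} from Definition~\ref{def:reachable-information} that you invoke; your argument goes through verbatim, and is in fact slightly simpler, if you replace the witnessing sequence for $s$ by a path to $s$ and append the transition $\tuple{s,\ell,s'}$ to obtain a path to $s'$.
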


We use this result to show that Algorithm~\ref{alg:main} is sound.

\begin{theorem}\label{thm:soundness}
    Any instantiation $\mathcal{A}$ of Algorithm~\ref{alg:main} is \emph{sound}.
\end{theorem}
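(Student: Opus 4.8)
The plan is to treat the two possible return values separately, since soundness of $\mathcal{A}$ means that returning \solvable{} implies $\Tsystem$ has a solution and returning \unsolvable{} implies it has none. In both cases the crucial point is that the \declsolvable{} and \declunsolvable{} operations are only applicable (line~\ref{lin:choose-op}) when their respective preconditions hold, so at the moment $\mathcal{A}$ returns, the conditions checked by those operations are in fact satisfied. Note that soundness makes no claim about termination, so I need not argue that any operation is ever applicable or that the loop ends.

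For a return of \solvable{} on line~\ref{lin:main-solvable}, the precondition of \declsolvable{} (line~\ref{lin:solvable-cond}) guarantees that $\Sknown$ contains some $s \in \Tgoal$. By Lemma~\ref{lem:reachable}, $s$ is reachable, so a path from $\Tinit$ to $s$ exists; such a path is a solution, and hence $\Tsystem$ is solvable.

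For a return of \unsolvable{} on line~\ref{lin:main-unsolvable}, I would first record two observations about the loop. First, $\Tinit \in \Sknown$ holds throughout, because it is inserted on line~\ref{lin:main-init-sknown} and $\Sknown$ is only ever enlarged, which happens solely in the \genunknown{} case (line~\ref{lin:genunknown}). Second, at the return the preconditions of \declunsolvable{} tell us that no goal state lies in $\Sknown$ (line~\ref{lin:goal-not-known}) and that no transition has its origin in $\Sknown$ and its target outside $\Sknown$ (line~\ref{lin:no-trans}); that is, $\Sknown$ is closed under successors. A straightforward induction on the length of a path from $\Tinit$ then shows that every reachable state lies in $\Sknown$: the base case uses $\Tinit \in \Sknown$, and the inductive step appends one transition and invokes closure under successors. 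Since no state of $\Sknown$ is a goal state, no reachable state is a goal state, so $\Tsystem$ has no solution and is unsolvable.

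The argument is mostly bookkeeping. The one place that needs care is justifying that the preconditions (the \emph{ensure} lines) may be assumed to hold at return time, which rests on reading them as applicability conditions for the chosen operation together with the monotone growth of $\Sknown$; the induction showing that $\Sknown$ captures all reachable states in the unsolvable case is the only genuine, if minor, inductive step.
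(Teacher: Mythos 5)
Your proposal is correct and follows essentially the same argument as the paper: the \solvable{} case via Lemma~\ref{lem:reachable}, and the \unsolvable{} case by arguing that $\Sknown$ must contain all reachable states when \declunsolvable{} is applicable. You merely spell out the induction (from $\Tinit \in \Sknown$ and closure under successors) that the paper leaves implicit.
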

\begin{proof}
    Instantiation $\mathcal{A}$ only returns \solvable\ if $\Sknown$ contains a goal state $\goal$.
    Then Lemma~\ref{lem:reachable} implies that $\goal$
    is reachable and the underlying problem is solvable.

    Further, $\mathcal{A}$ only returns \unsolvable\ if \genunknown{} is not
    applicable (line~\ref{lin:no-trans}) and thus that
    $\Sknown$ contains all reachable states. Since no goal state is in $\Sknown$
    (line~\ref{lin:goal-not-known}), this means no goal is reachable.
\end{proof}

We would also like instantiations of Algorithm~\ref{alg:main} to terminate in a
finite number of steps.

\begin{theorem}\label{thm:completeness}
    An instantiation $\mathcal{A}$ of Algorithm~\ref{alg:main} is
    \emph{complete} if all loops over \genknown{} and \refine{}
    are finite.
\end{theorem}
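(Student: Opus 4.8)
The plan is to argue that under the stated hypothesis, any infinite run of $\mathcal{A}$ is impossible, so the algorithm must eventually return. First I would observe that there are only five kinds of operations, so an infinite run must use at least one of them infinitely often. The operations \declsolvable{} and \declunsolvable{} each terminate immediately via a \RETURN, so they occur at most once and cannot be the culprit. Thus an infinite run must use \genunknown{}, \genknown{}, or \refine{} infinitely often. The key structural fact is that \genunknown{} strictly enlarges $\Sknown$ (it adds a previously unknown $s'$), and $\Sknown \subseteq \Tstates$ with $\Tstates$ finite; hence \genunknown{} can be applied at most $|\Tstates| - 1$ times over the whole run. So an infinite run must, from some point on, use only \genknown{} and \refine{} operations — i.e.\ it ends with an infinite suffix consisting entirely of \genknown{} and \refine{} steps.

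Next I would connect this to the hypothesis. The phrase ``all loops over \genknown{} and \refine{} are finite'' should be read as: in $\mathcal{A}$, any maximal contiguous block of consecutive \genknown{}/\refine{} operations (uninterrupted by a \genunknown{}, \declsolvable{}, or \declunsolvable{} step) has finite length. Given that, an infinite suffix of only \genknown{}/\refine{} steps is itself one such block (it is contiguous and never interrupted), contradicting finiteness. Combining: \genunknown{} fires only finitely often, the two \decl\ operations fire at most once, and between/after \genunknown{} steps the \genknown{}/\refine{} blocks are finite, so the total number of operations is finite; since \texttt{line~\ref{lin:choose-op}} requires choosing an \emph{applicable} operation and the only way to leave the \LOOP{} is via a \RETURN in \declsolvable{} or \declunsolvable{}, the run must reach such a \RETURN, i.e.\ $\mathcal{A}$ terminates.

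One small gap to close carefully: I must rule out the run getting ``stuck'' — reaching a configuration where the \LOOP{} continues but no operation is applicable, or where $\mathcal{A}$'s choice rule declines to terminate even though it could. The framework as written does not force progress, so strictly speaking ``complete'' here should mean that $\mathcal{A}$ does not admit an infinite run; whether it actually returns also needs that $\mathcal{A}$'s scheduling eventually selects \declsolvable{} or \declunsolvable{} when those become the only applicable operations. I expect the intended reading is the former (no infinite execution), and the main obstacle is just pinning down the precise meaning of ``loops over \genknown{} and \refine{}'' so that the finiteness hypothesis does the work; once that is fixed, the counting argument above is routine. I would state the finiteness-of-blocks interpretation explicitly at the start of the proof to make the argument airtight.
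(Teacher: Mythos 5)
Your counting argument is essentially the paper's own proof: \genunknown{} fires at most $\card{\Tstates}$ times because it strictly enlarges $\Sknown$, the two declaration operations return immediately, and the finiteness hypothesis rules out an infinite tail of \genknown{}/\refine{} steps. The one point you flag as a gap --- whether the run can get stuck with no applicable operation --- is closed in the paper by a one-line case split: if some transition leaves $\Sknown$ then \genunknown{} is applicable, and otherwise \declsolvable{} or \declunsolvable{} is applicable depending on whether $\Sknown$ contains a goal state, so a terminating choice is always available once the finitely many \genunknown{} applications and the finite \genknown{}/\refine{} loops are exhausted.
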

\begin{proof}
%

    Whenever $\mathcal{A}$ chooses an operation, either \genunknown{},
    \declsolvable{}, or \declunsolvable{} is applicable: if a transition
    $\tuple{s, \ell, s'} \in \Ttransitions$ with $s \in \Sknown$ and $s' \notin
    \Sknown$ exists, then \genunknown{} is applicable, otherwise either
    \declsolvable{} or \declunsolvable{} is applicable depending on whether a
    goal state is contained in $\Sknown$ or not.

    Further, the number of times operation \genunknown{} can be applied is
    bounded by $\card{\Tstates}$ as it requires $s' \notin \Sknown$ and adds
    $s'$ to $\Sknown$ when applied.

    Since no infinite loops over \genknown{} and \refine{} occur in
    $\mathcal{A}$, and because one of the remaining operations must be
    applicable, $\mathcal{A}$ must eventually terminate with \declsolvable{} or
    \declunsolvable.
\end{proof}

\section{Dynamic \boldmath$\astar$}

Algorithm~\ref{alg:dynastar} instantiates the dynamic heuristic search framework
as $\dynastar$, a generalization of $\astar$ for dynamic heuristics that, as we
will see, covers many existing techniques.

\begin{algorithm}[t!]
    \begin{algorithmic}[1]
    \small
        \REQUIRE transition system $\tuple{\Tstates, \Tlabels, \Tcost,
        \Ttransitions, \Tinit, \Tgoal}$; set of $\srcs = \{\src_p, \src_h\}$;
            dynamic heuristic $\hdyn$ over $\src_h$; flag \reeval{}
        \ENSURE solution or \unsolvable{}
        \STATE $\infos \assign \{\src \mapsto \Sinitinfo_{\src} \mid \src \in
            \srcs\}$\label{lin:dynastar-init-infos}
        \STATE $i \assign 0$; $j \assign 0$\COMMENT{counters for iterations and
            steps within them}
        \STATE $\Sknown \assign \{\Tinit\}$; $\closed \assign \{\}$
        \STATE $\open \assign$ \NEW{} priority queue with elements\\
            \quad$\tuple{\text{state}, \gval, \hval}$, prioritizing lower $f =
            \gval + \hval$
        \IF{$h^{i, j}(\Tinit) < \infty$}\label{lin:init-dead-end-check}
            \STATE insert $\tuple{\Tinit, g^{i, j}(\Tinit), h^{i, j}(\Tinit)}$
                into \open{}\label{lin:insert-init}
        \ENDIF
        \WHILE{\open{} is not empty}
            \STATE $i \assign i+1$; $j \assign 0$\label{lin:i-inc-j-reset}
            \STATE $\tuple{s, \gval, \hval} \assign$ pop highest priority
            element from $\open$\label{lin:pop}\label{lin:while-loop-start}
            \IF{$s \in \closed$}\label{lin:closed-check}
                \CONTINUE
            \ENDIF
            \FORALL{$\src \in \srcs$}\label{lin:refine-loop}
                \STATE $\infos(\src) \assign
                    \Srefine_{\src}(\infos(\src), s)$\label{lin:end-refine-loop}
            \ENDFOR
            \STATE $j \assign 1$
            \IF{\reeval{} \AND $\hval < h^{i,j}(s)$}\label{lin:reeval-check}
                \IF{$h^{i,j}(s) < \infty$}\label{lin:reeval-dead-end-check}
                    \STATE insert $\tuple{s, g^{i, j}(s), h^{i, j}(s)}$ into
                        \open{}\label{lin:reeval-insert}
                \ENDIF
                \CONTINUE\label{lin:reeval-continue}
            \ENDIF
            \STATE $\closed \assign \closed \cup \{s\}$\label{lin:close-s}
            \IF{$s$ is a goal state}\label{lin:goal-check}
                \RETURN extracted solution according to $\src_p$
                    \label{lin:dynastar-solvable}
            \ENDIF
            \FORALL{$t = \tuple{s, \ell, s'} \in \Ttransitions$}\label{lin:for-successors}
                \IF{$s' \in \Sknown$}\label{lin:new-check}
                    \STATE $\oldg \assign g^{i, j}(s')$
                \ELSE
                    \STATE $\oldg \assign \text{undefined}$
                \ENDIF
                \FORALL{$\src \in \srcs$}\label{lin:update-loop}
                    \STATE $\infos(\src) \assign \Supdate_{\src}(\infos(\src),t)$
                    \label{lin:update}
                \ENDFOR
                \STATE $j \assign j + 1$\label{lin:j-inc}
                \STATE $\Sknown \assign \Sknown \cup \{s'\}$\label{lin:insert-sknown}
                \IF{$h^{i, j}(s') = \infty$}\label{lin:infty-check}
                    \CONTINUE\label{lin:infty-continue}
                \ENDIF
                \IF[first path to $s'$]{$\oldg = \text{undefined}$}\label{lin:insert-new-check}
                    \STATE insert $\tuple{s', g^{i, j}(s'), h^{i, j}(s')}$ into \open{}
                        \label{lin:insert-new}
                \ELSIF[cheaper path to $s'$]{$\oldg > g^{i, j}(s')$}\label{lin:insert-cheaper-check}
                    \IF{$s' \in \closed$}\label{lin:re-expand-check}
                        \STATE $\closed \assign \closed \setminus \{s'\}$\label{lin:reopen}
                    \ENDIF
                    \STATE insert $\tuple{s', g^{i, j}(s'), h^{i, j}(s')}$ into \open{}\label{lin:insert-cheaper}
                \ENDIF
            \ENDFOR
        \ENDWHILE
        \RETURN \unsolvable{}\label{lin:dynastar-unsolvable}
    \end{algorithmic}
    \caption{$\dynastar$ with optional re-evaluation.}%
    \label{alg:dynastar}%
\end{algorithm}

The algorithm depends on the progression-based information source $\src_p$,
\ie{} the parent source $p$, to track $g$-values and parent pointers as well as
on an information source $\src_h$ used by a dynamic heuristic $\hdyn$. Like
regular $\astar$, $\dynastar$ maintains \open{}, a priority queue of states
ordered by $f=g+h$, and \closed{}, a hash set of states.

Since information changes over time, we use the counters $i$ for iterations and
$j$ for steps within the iteration, to reference specific \emph{times} $t =
\tuple{i, j}$. We then refer to information available to the algorithm at a
particular time $t$ by $\infos^t$, which is unambiguous since $i$ or $j$ always
change immediately after $\infos$ is modified. Similarly, we use $g^t(s)$ to
refer to the $g$-value stored in $\infos^t(\src_p)$ for state $s$, and
$\hdyn^t(s)$ to refer to $\hdyn(s, \infos^t(\src_h))$.

For example, the initial state $\Tinit$ is inserted into \open{} in
line~\ref{lin:insert-init} with $g^{0,0}(\Tinit)$ and $h^{0,0}(\Tinit)$, then
popped in line~\ref{lin:pop} in the first iteration at time $\tuple{1,0}$. If
its first successor $s'$ is not pruned, it is inserted in
line~\ref{lin:insert-new} with $g^{1,2}(s')$ and $h^{1,2}(s')$. We say
$\tuple{i,j} < \tuple{i', j'}$ if $i < i'$ or if $i = i'$ and $j < j'$.

Entries in \open{} are sorted by their $f$-value at time of insertion, not the
current $f$-values of the contained states. Put differently, if a heuristic
improves during the search, this does not directly affect entries already on
\open{}.

$\dynastar$ implements \emph{delayed duplicate detection}, a technique often
used in practice, \eg{} in Fast Downward \citep{helmert-jair2006}, that allows
multiple entries for a state on \open{} and later eliminates duplicates in
line~\ref{lin:closed-check}. The alternative is to detect duplicate states early
(at the time of insertion) and update open list entries when a cheaper path is
found. However, the optimal time complexity of this version relies on an
efficient \textit{decrease-key} operation which practical implementations of
priority queues like binary heaps do not support.

Optionally, states whose heuristic value improved since being inserted into
\open{} can be \emph{re-inserted} with their new heuristic value in
line~\ref{lin:reeval-insert}, an idea used by systems relying on dynamic
heuristics \citep[\eg{},][]{karpas-domshlak-ijcai2009, zhang-bacchus-aaai2012,
eifler-fickert-socs2018}. We call this modification \emph{re-evaluation} and
study the algorithm with and without it. States that are neither detected as
duplicates nor re-evaluated are \emph{expanded}. If a goal state is expanded, a
plan is constructed from the parent pointers stored in $\src_p$, otherwise all
successors are considered.

Successors with a heuristic value of $\infty$ are skipped
(line~\ref{lin:infty-continue}) and unknown successors are inserted into \open{}
(line~\ref{lin:insert-new}). If we find a new path to a known successor
$s'$, then $s'$ is only placed on \open{} if this new path is cheaper than the
one we already knew (line~\ref{lin:insert-cheaper-check}). In case $s'$ was
already closed, it is \emph{reopened} (line~\ref{lin:reopen}); We will later
show that, with some restrictions on the heuristic, states are never reopened.

$\dynastar$ simulates $\astar$ if heuristic $h$ is static, \ie{}, $h(s, \info)$
is independent of $\info$. Note that the value of $\reeval$ has no effect on the
algorithm in this case: re-evaluation never occurs because $\hval = h^{i,j}(s)$
in line~\ref{lin:reeval-check}.

Let us now connect $\dynastar$ to the generic framework.

\begin{theorem}\label{thm:dynastar-instantiation}
    The algorithm $\dynastar$ using a \dynsafe{} dynamic heuristic is an
    instantiation of the dynamic heuristic search framework.
\end{theorem}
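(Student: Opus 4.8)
The goal is to show that every run of $\dynastar$ (Algorithm~\ref{alg:dynastar}) corresponds to a legal run of the generic framework (Algorithm~\ref{alg:main}). So the plan is to exhibit, for each step $\dynastar$ takes, a choice of applicable operation in Algorithm~\ref{alg:main} that produces the same effect on the shared state — namely on $\infos$ and $\Sknown$ — and to check that the preconditions ("choose applicable operation" in line~\ref{lin:choose-op}) are met whenever $\dynastar$ performs the corresponding action.

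First I would set up the correspondence. The state maintained by Algorithm~\ref{alg:main} that $\dynastar$ must be shown to manipulate legally is $\infos$ and $\Sknown$; the extra bookkeeping in $\dynastar$ ($\open$, $\closed$, the counters $i,j$) is internal and does not constrain the framework. Line~\ref{lin:dynastar-init-infos} matches line~\ref{lin:main-init-infos} and the initialization $\Sknown \assign \{\Tinit\}$ matches line~\ref{lin:main-init-sknown}, using $\srcs = \{\src_p, \src_h\}$. Then I would walk through the body of the while loop. The only places $\dynastar$ touches $\infos$ or $\Sknown$ are: the refinement loop in lines~\ref{lin:refine-loop}--\ref{lin:end-refine-loop}, which is exactly the \refine{} case of Algorithm~\ref{alg:main} applied to the popped state $s$ (note $s \in \Sknown$ since it was inserted into $\open$ only after being added to $\Sknown$); and the successor loop, where for each transition $t = \tuple{s,\ell,s'}$ the update loop in lines~\ref{lin:update-loop}--\ref{lin:update} together with $\Sknown \assign \Sknown \cup \{s'\}$ in line~\ref{lin:insert-sknown} matches either the \genunknown{} case (if $s' \notin \Sknown$, using the transition $t$ whose origin $s$ is known) or the \genknown{} case (if $s' \in \Sknown$). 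In all cases the origin $s$ of the chosen transition is in $\Sknown$, so the framework's preconditions on line~\ref{lin:choose-t} are satisfied. The heuristic-value computations $h^{i,j}(\cdot)$ and the $g$-value lookups only read $\infos$ and never modify it, so they impose no constraints.

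Next I would handle termination. $\dynastar$ returns in two ways. On line~\ref{lin:dynastar-solvable} it returns a solution; at that point $s$ is a goal state and $s \in \Sknown$, so the framework can take the \declsolvable{} operation, whose only requirement (line~\ref{lin:solvable-cond}) is that $\Sknown$ contains a goal state. On line~\ref{lin:dynastar-unsolvable} it returns \unsolvable{} because $\open$ is empty; here I would argue that $\Sknown$ contains no goal state and that no transition leaves $\Sknown$, so the framework can take \declunsolvable{} (lines~\ref{lin:goal-not-known}--\ref{lin:no-trans}). This is the one place where \dynsafety{} of $\hdyn$ is actually needed: I must show that when $\open$ empties out, every reachable state has been added to $\Sknown$. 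The point is that a state can be reached but never enter $\open$ only if it is pruned by an $h = \infty$ test (lines~\ref{lin:init-dead-end-check}, \ref{lin:reeval-dead-end-check}, \ref{lin:infty-check}); \dynsafety{} guarantees such a state $s$ has $\hstar(s) = \infty$, hence no goal is reachable from it, so not exploring its successors cannot hide a reachable goal. Formally I would show: if $\open$ is empty and $s \in \Sknown$ with an outgoing transition to $s' \notin \Sknown$, then $s'$ was pruned, so $h^{t}(s') = \infty$ for some reachable $t$, so $\hstar(s') = \infty$, so no goal lies beyond $s'$; iterating this over the frontier shows no goal is reachable at all, and since $\dynastar$ would have returned a solution upon expanding any known goal, $\Sknown$ contains no goal either — exactly the \declunsolvable{} preconditions.

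The main obstacle is this last step: the interaction between the $h = \infty$ pruning and the soundness condition of the \declunsolvable{} branch. One has to be careful that pruned-but-reachable states do not leave an undiscovered path to a goal, and that delayed duplicate detection together with re-evaluation (the \reeval{} \CONTINUE{} on line~\ref{lin:reeval-continue}) does not cause $\dynastar$ to terminate prematurely with a non-empty set of still-reachable, unpruned states outside $\Sknown$. Everything else — matching the four information-affecting operations, checking that chosen transitions have known origins, and checking the two \DECL{} preconditions for solvable instances — is a routine case analysis against the line numbers of the two algorithms.
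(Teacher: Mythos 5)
Your overall decomposition matches the paper's: map the initialization and the refine/update loops to the framework's operations, check that chosen transitions have known origins, and handle the two return statements separately, with \dynsafety{} doing the work in the \unsolvable{} case. The first part of your plan is fine. The problem is in the \declunsolvable{} step, where your plan as stated would fail.

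You propose to show that when \open{} empties, ``no transition leaves $\Sknown$,'' so that the precondition of \declunsolvable{} (line~\ref{lin:no-trans}) holds. But this claim is simply false in general: a state $s$ with $h^{i,j}(s) = \infty$ is added to $\Sknown$ in line~\ref{lin:insert-sknown} \emph{before} the pruning check in line~\ref{lin:infty-check}, so it ends up in $\Sknown$ but is never placed on \open{} and never expanded; its successors may therefore lie outside $\Sknown$, and the transitions to them leave $\Sknown$. (Relatedly, your phrasing ``then $s'$ was pruned'' has the roles reversed --- the pruned state is the known origin $s$, not the unknown target $s'$, which was never generated at all.) Your safety argument shows that no goal hides beyond such pruned states, which establishes that the \emph{answer} \unsolvable{} is correct, but the theorem asks for something stronger: that the run of $\dynastar$ is a legal run of Algorithm~\ref{alg:main}, i.e., that \declunsolvable{}'s literal preconditions can be met. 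The missing move, which the paper makes, is to show that the moment $\dynastar$ reaches line~\ref{lin:dynastar-unsolvable} corresponds to a framework run extended by additional \genunknown{} operations that exhaust the states reachable from the pruned frontier --- \dynsafety{} guaranteeing that these extra operations never add a goal state to $\Sknown$ --- followed only then by \declunsolvable{}. A second, smaller gap: your argument that $\Sknown$ contains no goal state relies on ``$\dynastar$ would have returned a solution upon expanding any known goal,'' but you must also rule out that a known goal was never expanded; this again needs \dynsafety{} (a goal state has $\hstar = 0$, so a \dynsafe{} heuristic gives it a finite value, so it is inserted into \open{}, and popping it either expands it or re-inserts it with a finite value, so \open{} cannot empty without it being expanded).
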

\begin{proof}
    We first show that $\dynastar$ only modifies $\infos$ and $\Sknown$ in ways
    allowed by the framework. The initialization in
    line~\ref{lin:dynastar-init-infos} is identical. The loop in
    lines~\ref{lin:refine-loop}--\ref{lin:end-refine-loop} corresponds to a
    \refine{} operation. We know $s\in \Sknown$ at this time because only known
    states are added to \open{}. The loop in
    lines~\ref{lin:update-loop}--\ref{lin:update} corresponds to \genknown{} if
    $s' \in \Sknown$ in line~\ref{lin:new-check}, and to \genunknown{}
    otherwise, in which case $s'$ is inserted into $\Sknown$ in
    line~\ref{lin:insert-sknown}.

    Next, we have to show that $\dynastar$ only terminates in case the framework
    can terminate. Line~\ref{lin:dynastar-solvable} corresponds to
    \declsolvable\ which is applicable due to the condition in
    line~\ref{lin:goal-check} and because $s$ was inserted into $\Sknown$ before
    inserted into $\open$ (and now popped from there).

    Lastly, in cases where line~\ref{lin:dynastar-unsolvable} returns
    \unsolvable{}, we have to show that
    \begin{enumerate*}
        \item no goal state is in $\Sknown$ and that\label{cas:no-goal-known}
        \item there is no transition leaving $\Sknown$.\label{cas:no-transition}
    \end{enumerate*}

    Condition~\ref{cas:no-goal-known} holds because
    every state in $\Sknown$ was inserted into \open{} at least once. Given that
    \open{} is empty in line~\ref{lin:dynastar-unsolvable}, all entries have
    been popped in line~\ref{lin:pop}. If there were a goal state in $\Sknown$,
    a \dynsafe\ heuristic assigns the state a finite heuristic value and
    popping it would either re-evaluate it (putting it back on \open{} without
    closing it) or terminate the search with a solution.

    Condition~\ref{cas:no-transition} does not necessarily hold since
    $\dynastar$ can prune states with infinite heuristic values in
    lines~\ref{lin:init-dead-end-check}, \ref{lin:reeval-dead-end-check},
    and~\ref{lin:infty-check}. Executing line~\ref{lin:dynastar-unsolvable} does
    thus not directly correspond to \declunsolvable{} in the framework. Instead,
    we show that it corresponds to repeated uses of \genunknown{} followed by
    \declunsolvable{}. Since the heuristic is \dynsafe, no goal state is
    reachable from all pruned states. We can thus use \genunknown{} to fully
    explore the state space reachable from $s$ and will not add a goal state to
    $\Sknown$. For all states that were not pruned, we can see with the same
    argument as above that they were added to \open{} and eventually expanded,
    adding all their unpruned successors to $\Sknown$.
\end{proof}

$\dynastar$ using a \dynsafe\ heuristic is sound due to
Theorems~\ref{thm:soundness} and \ref{thm:dynastar-instantiation} and we can
also show that it can be complete.

\begin{theorem}
    $\dynastar$ is complete if chains of \refine{} operations converge, \ie{},
    for all reachable $\info_0$, states $s$, and $\info_i =
    \refine_{\src_h}(\info_{i-1},s)$ with $i > 0$, there is an $n \geq 0$ such
    that $\hdyn(s,\info_n) = \hdyn(s,\info_{n+1})$.
\end{theorem}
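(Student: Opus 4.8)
The plan is to establish completeness by showing that no run of $\dynastar$ executes infinitely many iterations of its while loop. Following the reasoning in the proof of Theorem~\ref{thm:completeness}, which applies to $\dynastar$'s control flow directly since discovering a new state corresponds to line~\ref{lin:insert-sknown} and happens at most $\card{\Tstates}$ times, it suffices to bound the number of iterations that occur without discovering a new state. The open list \open{} only grows in lines~\ref{lin:insert-init}, \ref{lin:insert-new}, \ref{lin:insert-cheaper}, and~\ref{lin:reeval-insert}, while each iteration removes one entry, so an infinite run would have to insert infinitely often. Line~\ref{lin:insert-init} fires once and line~\ref{lin:insert-new} at most $\card{\Tstates}$ times (each adds a fresh state to \Sknown{}). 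Line~\ref{lin:insert-cheaper} requires a strict decrease of a $g$-value; by the merge rule of the parent source, $g$-values are non-increasing, bounded below by $0$, and in a finite transition system with non-negative label costs take only finitely many distinct values, so the standard Dijkstra-with-reopening argument shows lines~\ref{lin:insert-cheaper} and~\ref{lin:reopen} fire only finitely often regardless. Consequently there is a time $T^{\ast}$ after which \Sknown{} and every $g$-value is fixed and no reopening occurs; past $T^{\ast}$ each state can be closed at most once more, so only finitely many expansions remain. Hence an infinite run would perform infinitely many \emph{re-evaluations} (line~\ref{lin:reeval-insert}) after $T^{\ast}$, and ruling this out is the heart of the proof.

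Assume for contradiction that this happens (if the flag \reeval{} is false, re-evaluation never occurs, so we may assume it is true). Since \Tstates{} is finite, some state $\hat{s}$ is re-evaluated infinitely often after $T^{\ast}$. Because re-evaluation applies only to a popped state not currently in \closed{}, and no reopening happens after $T^{\ast}$, the state $\hat{s}$ is in fact never expanded after $T^{\ast}$. The key structural observation is that each re-evaluation of $\hat{s}$ first refines $\hat{s}$ in line~\ref{lin:refine-loop} and then, in line~\ref{lin:reeval-check}, compares the stored $\hval$ --- itself the value of $\hdyn(\hat{s},\cdot)$ computed right after an \emph{earlier} refinement of $\hat{s}$ --- with the freshly recomputed $h^{i,j}(\hat{s}) = \hdyn(\hat{s}, \Srefine_{\src_h}(\info,\hat{s}))$, re-inserting $\hat{s}$ only if $\hval < h^{i,j}(\hat{s})$. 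Since the $g$-value of $\hat{s}$ is fixed after $T^{\ast}$, the next entry popped for $\hat{s}$ is always the one with smallest stored $\hval$; tracking the multiset of $\hval$-values of $\hat{s}$'s open-list entries, every re-evaluation removes its minimum and inserts a strictly larger value, so that minimum is non-decreasing over the re-evaluations of $\hat{s}$.

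The main obstacle is to contradict the convergence hypothesis, which concerns a chain $\info_i = \refine_{\src_h}(\info_{i-1},\hat{s})$ of refinements of the \emph{single} state $\hat{s}$, whereas between two successive re-evaluations of $\hat{s}$ the algorithm also refines \emph{other} states. I would resolve this by an induction on the finite, nonempty set $R$ of states re-evaluated infinitely often after $T^{\ast}$: past some $T^{\ast\ast} \ge T^{\ast}$ only states in $R$ are ever refined, and each is refined infinitely often, so every $\hat{s} \in R$ is eventually re-evaluated with a post-refinement value $h^{i,j}(\hat{s}) = \hdyn(\hat{s}, \Srefine_{\src_h}(\info,\hat{s}))$ for the current --- necessarily reachable --- information \info{}. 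The crux is to argue that the refinements of the other states in $R$, interleaved between refinements of $\hat{s}$, cannot keep raising $\hdyn(\hat{s}, \Srefine_{\src_h}(\cdot,\hat{s}))$ indefinitely: intuitively, applying the hypothesis after each such interleaving (re-seeding $\info_0$ at the current reachable information) forces the post-refinement heuristic value of $\hat{s}$ to stabilize, after which the strict inequality in line~\ref{lin:reeval-check} fails and $\hat{s}$ is no longer re-evaluated, contradicting $\hat{s} \in R$; removing $\hat{s}$ from $R$ and iterating empties $R$, contradicting the assumed infinitely many re-evaluations. I expect making this re-seeding argument rigorous --- precisely, ensuring that interleaved refinements of other states cannot indefinitely ``undo'' the convergence guaranteed for $\hat{s}$ --- to be the technically delicate step, and the place where one may need to invoke the way $\dynastar$ orders and refines open-list entries rather than the convergence hypothesis alone.
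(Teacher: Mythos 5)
Your reduction is sound and, up to the crux, matches the paper's: the paper simply invokes Theorem~\ref{thm:completeness} and must then rule out infinite runs of \genknown{} and \refine{} operations, and your bound on line~\ref{lin:insert-cheaper} via the finitely many values a $g$-value can take is exactly the paper's argument for \genknown{}. The gap is the step you yourself flag as delicate, and it is a genuine one. The paper disposes of the \refine{} side in a single sentence (``the former cannot happen because \refine{} operations converge''), but you correctly observe that the stated hypothesis speaks only of \emph{uninterrupted} chains $\info_i = \refine_{\src_h}(\info_{i-1}, s)$ for a \emph{single} state $s$, whereas $\dynastar$ performs exactly one refinement of $s$ per pop and interleaves refinements of other states in between. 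Your proposed fix --- re-seeding $\info_0$ at the current reachable information after each interleaving --- does not close the gap: from any re-seeded $\info_0$ the hypothesis only promises stabilization after some number $n$ of \emph{consecutive} refinements of $s$, but the algorithm never executes more than one such refinement before the information is perturbed again, so the guaranteed index is never reached and no contradiction with $\hat{s} \in R$ is obtained.

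To see that this is not a routine technicality, consider an information source with two counters $c_1, c_2$ and two non-goal leaves $s_1, s_2$, where $\hdyn(s_i, \info) = c_i$, refining $s_1$ increments $c_1$ only if $c_1 \le c_2$, and refining $s_2$ increments $c_2$ only if $c_2 < c_1$. Every uninterrupted chain of refinements of a single state stabilizes after at most one step from every reachable $\info$, so the convergence hypothesis holds; yet with \reeval{} enabled and suitable tie-breaking, alternating pops of $s_1$ and $s_2$ each strictly raise the just-refined state's value and trigger re-evaluation forever. So the missing step cannot be supplied from the stated hypothesis alone: one needs either a strengthened hypothesis (convergence of $\hdyn(s,\cdot)$ under arbitrary interleaved sequences of \refine{} operations, which is what the paper's one-line argument implicitly assumes) or an additional structural property of $\Srefine_{\src_h}$ (e.g., that refining $s'$ does not affect $\hdyn(s,\cdot)$ for $s \neq s'$). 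Your closing suspicion that the convergence hypothesis by itself does not suffice is exactly right; the proof as proposed is therefore incomplete, and the incompleteness is inherited from, not worse than, the paper's own argument.
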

\begin{proof}
    Using Theorem~\ref{thm:completeness}, we only need to show that there are no
    infinite loops of \refine{} and \genknown{} operations. The former cannot
    happen because \refine{} operations converge. The latter holds because known
    transitions are only considered if we find a cheaper path and after
    generating a first path to a state there are only finitely many values that
    can occur as cheaper path costs.
\end{proof}

\subsection{Optimality of Solutions}

We first show that $\dynastar$ returns optimal solutions when using a
\dynadmissible{} heuristic. The proof generally follows the same line of
reasoning as the one for $\astar$ with a static admissible heuristic. Our
situation is more complicated because using delayed duplicate detection with
dynamic heuristics means that there can be states on \open{} where both
$g$-values and $h$-values are outdated because we discovered both a cheaper path
to and a higher heuristic value for those states before popping them from the
open list. Additionally, we want to show optimality both with and without
re-evaluation which adds an additional source of changes to the open list.

As in the static case \citep[\eg{},][]{hart-et-al-ieeessc1968},
our general strategy is to show that at any
time before termination, there is an entry on \open{} that represents a prefix
of an optimal plan, and the algorithm can only terminate by completing one of
these prefixes. To state this formally, we first define some auxiliary notation.


\begin{definition}
    A state $s$ is called \emph{settled} in iteration $i$ if it was expanded in
    some iteration $i' < i$ with $g^{i'\!,0}(s) = g^*(s)$.
\end{definition}

Once a state is settled, we know a cheapest path to it and have seen its
successors along this path. If we consider an optimal path, some prefix of it
will be settled and the next state along this path will be on \open{} with an
optimal $g$-value.

\begin{lemma}\label{lem:optimal-continuation}
    Consider $\dynastar$ with a \dynsafe{} dynamic heuristic in iteration $i$.
    Let $s$ be a state settled in iteration~$i$ and let $s'$ be a state where
    $\hstar(s') < \infty$ such that $s$ is a predecessor of $s'$ and there is an
    optimal path $\tuple{t_1, \ldots, t_n}$ to $s'$ with $t_n =
    \tuple{s,\ell,s'}$, \ie{}, $s'$ can be reached optimally through $s$.
    Then \open{} contains an entry $\tuple{s', \gval, \cdot}$ in iteration $i$
    and $\gval =  g^*(s')$, or $s'$ is settled in iteration $i$.
\end{lemma}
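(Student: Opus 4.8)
The plan is to induct on the iteration $i$. The key observation is that $s$ is settled in iteration $i$, so by definition $s$ was expanded in some iteration $i' < i$ with $g^{i'\!,0}(s) = g^*(s)$. At the moment $s$ was expanded, the transition $t_n = \tuple{s,\ell,s'}$ was processed in the loop at line~\ref{lin:for-successors}: the information sources were updated along $t_n$ (so $s' \in \Sknown$ afterward), and since $\hstar(s') < \infty$ and the heuristic is \dynsafe{}, the value $h^{i',j}(s')$ at that time was finite, so the pruning at line~\ref{lin:infty-check} did not apply. Using the properties of the parent source $\src_p$ (progression of $g^*(s)$ along $t_n$ yields $g^*(s) + \Tcost(\ell) = g^*(s')$ since the path is optimal, and $\Smerge_p$ keeps the minimum), the $g$-value stored for $s'$ from that point on is at most $g^*(s')$, hence exactly $g^*(s')$ by optimality. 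At the end of processing $t_n$ in iteration $i'$, $s'$ was inserted into \open{} with $g$-value $g^*(s')$: either as a first path (line~\ref{lin:insert-new}), or as a cheaper path (line~\ref{lin:insert-cheaper}), or---if neither---then $s'$ already had $g$-value $g^*(s')$ and was either already on \open{} or already closed with that $g$-value.

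The remaining work is to track this entry forward from iteration $i'+1$ to iteration $i$ and show it is either still on \open{} with $g$-value $g^*(s')$ or $s'$ has become settled. I would argue that any time an entry $\tuple{s', g^*(s'), \cdot}$ is removed from \open{} without $s'$ becoming settled, a replacement entry $\tuple{s', g^*(s'), \cdot}$ is re-inserted. There are two ways such an entry leaves \open{} in an iteration that does not settle $s'$: it is popped and found to be in \closed{} (line~\ref{lin:closed-check}), or it is popped and re-evaluated (lines~\ref{lin:reeval-check}--\ref{lin:reeval-continue}). In the re-evaluation case, since the heuristic is \dynsafe{} and $\hstar(s') < \infty$, the new heuristic value is finite, so $s'$ is re-inserted in line~\ref{lin:reeval-insert} with the same $g$-value $g^*(s')$. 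In the closed case, $s'$ was closed in some earlier iteration $i''$; if $g^{i''\!,0}(s') = g^*(s')$ then $s'$ is settled in iteration $i$ and we are done, and otherwise $s'$ was closed with a strictly larger $g$-value, but then the $g$-value stored for $s'$ can never have dropped to $g^*(s')$ while an entry with that value sat on \open{} (merging only decreases stored $g$-values, and once it reaches $g^*(s')$ it stays there), contradicting that the popped entry has $g$-value $g^*(s')$ and $s'$ is already closed with a larger one---so this sub-case cannot arise. One also has to note that whenever $s'$ is reopened (line~\ref{lin:reopen}) it is simultaneously re-inserted into \open{} with the current (hence $\le g^*(s')$, hence $= g^*(s')$) $g$-value.

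The main obstacle I expect is the bookkeeping around delayed duplicate detection combined with re-evaluation: \open{} may simultaneously hold several entries for $s'$ with stale $g$-values, and stored $g$-values change independently of what is recorded in the queue entries, so I must be careful to phrase the invariant in terms of the existence of \emph{some} entry with $g$-value $g^*(s')$ (not all of them) and to track how that entry is preserved across pops that are discarded as duplicates, pops that trigger re-evaluation, and reopenings. A secondary subtlety is ensuring that once the stored $g$-value for $s'$ equals $g^*(s')$ it never changes again---this follows from optimality of $g^*(s')$ together with the definition of $\Smerge_p$, which only ever replaces a stored pair by one of no greater $g$-value---and that therefore no spurious cheaper-path insertion with a different $g$-value can displace our invariant.
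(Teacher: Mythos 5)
Your proof is correct and follows essentially the same strategy as the paper's: locate an insertion of an entry $\tuple{s', g^*(s'), \cdot}$ caused (directly or at an earlier point) by the optimal-$g$ expansion of $s$, then track it forward through re-evaluations, duplicate discards, and reopenings, showing that each event either preserves such an entry on \open{} or witnesses that $s'$ is settled. The paper tidies up the one loose spot in your first paragraph (the ``if neither insert fires'' branch) by anchoring the argument at the \emph{first} time the stored $g$-value of $s'$ drops to $g^*(s')$, where an insertion into \open{} is guaranteed to occur.
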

\begin{proof}
    Consider states $s$ and $s'$ such that the conditions of the lemma hold in
    iteration $A$. Because $s$ is settled in iteration $A$, it must have
    been expanded in an iteration $B < A$ with $g^{B,0}(s) = g^*(s)$.

    Because $s$ was expanded, iteration $B$ reached line~\ref{lin:close-s}.
    Also, $\dynastar$ did not terminate in line~\ref{lin:dynastar-solvable} in
    iteration $B$, because otherwise iteration $A$ would not exist. Thus
    $\tuple{s, \ell, s'}$ was considered at a time $\tuple{B,j}$, we found an
    optimal path to $s'$, and $g^{B,j}(s') = g^*(s')$.

    Let $\tuple{C,k}$ be the first time that the $g$-value of $s'$ is reduced to
    $g^*(s')$. (This happens at $\tuple{B, j}$ at the latest but could also have
    happened earlier.) At this time, either $s'$ was not known or a cheaper path
    to it was found. Because $\hstar(s') < \infty$ and $\hdyn$ is \dynsafe{}, we
    have $\hdyn^{C,k}(s') < \infty$ and $s'$ was inserted into \open{} with a
    $g$-value of $g^{C,k}(s') = g^*(s')$. Any entry for $s'$ added to \open{}
    after $\tuple{C,k}$ also has a $g$-value of $g^*(s')$.

    If the entry $\tuple{s', g^*(s'), \cdot}$ inserted in iteration $C$ is still
    on \open{} in iteration $A$, then the first alternative of the lemma's consequent
    is satisfied.
    Otherwise, one or more entries for $s'$ were popped from \open{} after $C$
    and
    \begin{enumerate*}
        \item\label{case:reeval} re-evaluated,
        \item\label{case:ignore} ignored because $s'$ was in \closed{}, or
        \item\label{case:expand} expanded.
    \end{enumerate*}
    In case~\ref{case:reeval}, a new entry $\tuple{s', g^*(s'), \cdot}$ is added
    to \open{} and the argument restarts at the beginning of this paragraph,
    replacing $C$ with the iteration of this reinsertion.
    Case~\ref{case:ignore} can only happen if $s'$ was expanded after $C$
    because Algorithm~\ref{alg:dynastar} ensures that states inserted into
    $\open$ are not closed\footnote{%
        This is explicit except in line~\ref{lin:insert-new} where $s'$ cannot
        be closed because it was just discovered.}
    and only expanding $s'$ can close it. It follows that $s'$ was expanded
    (case~\ref{case:expand}) with $g$-value $g^*(s')$ in an iteration between
    $C$ and $A$ and thus that $s'$ is settled in $A$, satisfying the second
    alternative of the lemma's consequent.
\end{proof}

With \dynadmissible{} heuristics, this result implies that there always is an
entry with an $f$-value of at most the optimal solution cost.

\begin{lemma}
    \label{lem:f-bound}
    Consider $\dynastar$ with a \dynadmissible{} dynamic heuristic $\hdyn$ for a
    solvable transition system with initial state $\Tinit$. Then \open{}
    contains an entry $\tuple{\cdot, \gval, \hval}$ with $\gval + \hval \leq
    \hstar(\Tinit)$ at the beginning of each iteration.
\end{lemma}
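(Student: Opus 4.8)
The plan is to apply Lemma \ref{lem:optimal-continuation} to a carefully chosen pair of states along an optimal solution, and then use \dynadmissibility{} to bound the $h$-value of the resulting open-list entry.

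First I would fix an optimal solution $\pi = \tuple{t_1, \ldots, t_n}$ from $\Tinit$ to some goal state $g$, writing $\Tinit = s_0, s_1, \ldots, s_n = g$ for the states along $\pi$, so that $g^*(s_k)$ equals the cost of the length-$k$ prefix of $\pi$ for each $k$ (this uses optimality of $\pi$; each prefix is itself an optimal path to $s_k$). Since the system is solvable with $\pi$ a solution, $\hstar(s_k) < \infty$ for every $k$. Now consider an arbitrary iteration $i$ and let $k$ be the largest index such that $s_k$ is settled in iteration $i$; note $s_0 = \Tinit$ need not be settled, so I should handle the base case $k$ undefined separately — there, I would argue directly that the initial entry $\tuple{\Tinit, 0, h^{0,0}(\Tinit)}$ inserted in line \ref{lin:insert-init} (which is inserted since \dynadmissibility{} gives $h^{0,0}(\Tinit) \le \hstar(\Tinit) < \infty$) has not yet been popped, because popping it in iteration $1$ would either re-evaluate it (leaving an entry on \open{}) or expand it (settling $\Tinit$, contradicting $k$ undefined for $i \ge 2$; and for $i=1$ nothing has been popped yet). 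In the main case, $s_k$ is settled in iteration $i$ and $s_{k+1}$ satisfies the hypotheses of Lemma \ref{lem:optimal-continuation} with $s = s_k$, $s' = s_{k+1}$: the prefix of $\pi$ up to $s_{k+1}$ is an optimal path ending in the transition $t_{k+1} = \tuple{s_k, \ell_{k+1}, s_{k+1}}$, and $\hstar(s_{k+1}) < \infty$. The lemma then tells us that either \open{} contains an entry $\tuple{s_{k+1}, \gval, \cdot}$ with $\gval = g^*(s_{k+1})$ in iteration $i$, or $s_{k+1}$ is settled in iteration $i$ — but the latter contradicts maximality of $k$, so the former holds.

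It remains to bound the $h$-value. The entry on \open{} has the form $\tuple{s_{k+1}, g^*(s_{k+1}), \hval}$ where $\hval = h^{t}(s_{k+1})$ for the time $t$ at which this entry was inserted, and $\infos^{t}(\src_h)$ is reachable (it occurs during a run of the algorithm, which by Theorem \ref{thm:dynastar-instantiation} is an instantiation of the framework, so every information object encountered is reachable per Definition \ref{def:reachable-information}; alternatively, $s_{k+1} \in \Sknown$ at that time and Lemma \ref{lem:reachable} applies to the state, while reachability of the information object follows from the framework correspondence). Hence \dynadmissibility{} gives $\hval \le \hstar(s_{k+1})$. Combining, the entry's $f$-value is
\[
    \gval + \hval = g^*(s_{k+1}) + \hval \le g^*(s_{k+1}) + \hstar(s_{k+1}) \le \hstar(\Tinit),
\]
where the last inequality holds because $g^*(s_{k+1})$ is the cost of an optimal path from $\Tinit$ to $s_{k+1}$ and $\hstar(s_{k+1})$ bounds the cost of completing it to a goal, and $\pi$ witnesses that concatenating an optimal prefix with an optimal completion costs at least $\hstar(\Tinit)$ — more carefully, the cost of the prefix of $\pi$ to $s_{k+1}$ is $g^*(s_{k+1})$ by choice of $\pi$, and the suffix of $\pi$ from $s_{k+1}$ to $g$ costs at least $\hstar(s_{k+1})$, wait that is the wrong direction, so instead I use that $g^*(s_{k+1}) + \hstar(s_{k+1})$ is a lower bound construction: $g^*(s_{k+1}) \le (\text{cost of $\pi$'s prefix to }s_{k+1})$ and $\hstar(s_{k+1}) \le (\text{cost of $\pi$'s suffix from }s_{k+1})$, summing to $\le \Tcost(\pi) = \hstar(\Tinit)$.

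The main obstacle I anticipate is the bookkeeping around re-evaluation in the base case: ensuring that the initial entry (or a re-evaluated successor of it) genuinely remains on \open{} whenever no prefix of $\pi$ is yet settled. This is exactly the kind of case-analysis already carried out inside the proof of Lemma \ref{lem:optimal-continuation} (the "restart the argument with $C$ replaced" maneuver), so I would either invoke that lemma with $s_{k+1} = \Tinit$ treated as reachable-through-itself if the statement permits, or inline a short version of the same re-evaluation-loop argument. A secondary, minor obstacle is being precise that "at the beginning of each iteration" means the state of \open{} right after line \ref{lin:i-inc-j-reset} and before the pop in line \ref{lin:pop}, which is the point at which Lemma \ref{lem:optimal-continuation}'s "in iteration $i$" should be read; I would align the two phrasings explicitly.
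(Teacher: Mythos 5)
Your proposal is correct and follows essentially the same route as the paper's proof: fix an optimal solution, locate the frontier between settled and unsettled states along it, invoke Lemma~\ref{lem:optimal-continuation} at that frontier, and bound the resulting entry's $f$-value via \dynadmissibility{} (your base-case handling of a possibly re-evaluated $\Tinit$ is, if anything, slightly more careful than the paper's). The one small omission, easily repaired, is to note explicitly that the goal state $s_n$ cannot be settled in any iteration that actually begins---settling it would require expanding it, which terminates the algorithm---so your maximal settled index $k$ satisfies $k < n$ and $s_{k+1}$ exists.
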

\begin{proof}
    Consider an optimal solution $\pi = \tuple{t_1, \dots, t_n}$ through states
    $s_0, \dots, s_n$, \ie{}, $t_i = \tuple{s_{i-1}, \cdot, s_i}$. Let $s_k$ be
    the first state that is not settled in an iteration $i$. Note that $s_n$
    could not have been settled without $\dynastar$ terminating.

    In case $k = 0$, we know that $s_0$ is settled in iteration~$1$, so $i$ must
    be $0$. Since $\hdyn$ is \dynadmissible\ and hence $\hstar(\Tinit) <
    \infty$, the entry $\tuple{\Tinit, 0, \hdyn^{0,0}(\Tinit)}$ was inserted
    into $\open$ and is the only entry at the beginning of iteration~$i$.
    The lemma then follows from the admissibility of $\hdyn^{0,0}$.

    In case $k > 0$, we know that $s_{k-1}$ is settled in iteration~$i$.
    Moreover, $\tuple{t_{k+1}, \ldots, t_n}$ is a path from $s_k$ to a goal
    state showing that $\hstar(s_k) < \infty$ and $\tuple{t_1, \ldots, t_k}$ is
    an optimal path to $s_k$. We can therefore use
    Lemma~\ref{lem:optimal-continuation} in iteration $i$ with $s = s_{k-1}$ and
    $s' = s_k$. Because $s_k$ is not settled in iteration $i$, we know that
    there must be an entry $\tuple{s_k, \gval, \hval}$ on \open{} where $\gval =
    g^*(s_k)$ and $\hval = \hdyn^{\tins}(s_k)$ for some time $\tins$. With the
    admissibility of $\hdyn^{\tins}$, we get $\gval + \hval = \gstar(s_k) +
    \hdyn^{\tins}(s_k) \leq \gstar(s_k) + \hstar(s_k) = \hstar(\Tinit)$.
\end{proof}

In the context of graphs with unknown obstacles, \citet{koyfman-et-al-aamas2024}
show that a variant of $\astar$ returns optimal solutions when using a
dynamic heuristic that is PDA\@. As PDA requires that a prefix of an optimal
solution is closed and its continuation exists on \open{},
Lemma~\ref{lem:f-bound} directly follows. Showing that a heuristic is PDA thus
requires reasoning about \open{} and depends on the search behavior.
Our Lemmas~\ref{lem:optimal-continuation} and~\ref{lem:f-bound} only depend on
the heuristic being \dynadmissible{}, which is independent of the search
algorithm.

While there is an entry with $f$-value of at most $\hstar(\Tinit)$ on $\open$,
$\dynastar$ cannot yield a solution of cost $c > \hstar(\Tinit)$.

\begin{theorem}\label{the:dynastar-optimality}
    $\dynastar$ with a \dynadmissible{} dynamic heuristic returns optimal
    solutions.
\end{theorem}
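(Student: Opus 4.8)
The plan is to combine Lemma~\ref{lem:f-bound} with an analysis of what happens at the moment $\dynastar$ returns a solution. Suppose for contradiction that $\dynastar$ returns a solution of cost $c > \hstar(\Tinit)$. A solution is returned only in line~\ref{lin:dynastar-solvable}, when some goal state $g$ is popped from \open{} in some iteration $i$, survives the $\closed$ check and the re-evaluation check, and is then expanded. The returned solution is the one extracted from the parent pointers in $\src_p$, so its cost is at most $g^{i,0}(g)$ (using the fact, noted after Definition~\ref{def:progression-based-info-src}, that following parent pointers yields a path of cost at most the stored $g$-value). Hence $g^{i,0}(g) \geq c > \hstar(\Tinit)$. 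Since $g$ is a goal state, $\hstar(g) = 0$, so the $f$-value associated with the popped entry is at least $g^{i,0}(g) > \hstar(\Tinit)$ — but I need to be careful here, because the entry was inserted at some earlier time with a possibly smaller $g$-value; what I actually know is that the entry's stored $\gval$ satisfies $\gval \geq g^{i,0}(g)$ is false in general. The cleaner route is: the entry popped has priority $f = \gval + \hval$; since the heuristic never decreases stored-vs-current in a way that matters, and since $\dynastar$ pops lowest $f$ first, I compare against the entry guaranteed by Lemma~\ref{lem:f-bound}.

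More precisely, the key step: at the beginning of iteration $i$, by Lemma~\ref{lem:f-bound} (whose hypothesis is met since the system is solvable and $\hdyn$ is \dynadmissible{}) there is an entry $\tuple{\cdot, \gval', \hval'}$ on \open{} with $\gval' + \hval' \leq \hstar(\Tinit)$. The entry for $g$ that gets popped in iteration $i$ has priority $\gval + \hval$. Because $g$ is popped (highest priority = lowest $f$), we have $\gval + \hval \leq \gval' + \hval' \leq \hstar(\Tinit)$. Now I argue $\gval \geq g^{i,0}(g)$: the $g$-value stored for $g$ only ever decreases over time, so the value $\gval$ recorded when the entry was inserted is at least the current value $g^{i,0}(g)$. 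Wait — that inequality goes the wrong way for me. Let me reconsider: I want a \emph{lower} bound on the cost of the returned plan in terms of $\gval$. The returned plan is extracted from $\src_p$ \emph{at the time of return}, i.e.\ using $g^{i,0}(g)$, and has cost $\leq g^{i,0}(g) \leq \gval$. So the returned solution has cost at most $\gval \leq \gval + \hval \leq \hstar(\Tinit)$. Since every solution has cost at least $\hstar(\Tinit)$, the returned solution has cost exactly $\hstar(\Tinit)$, i.e.\ it is optimal. This also silently handles the \unsolvable{} case: if the system is unsolvable there is nothing to prove, and if it is solvable $\dynastar$ cannot return \unsolvable{} because, by Lemma~\ref{lem:f-bound}, \open{} is never empty at the start of an iteration — combined with completeness and soundness, it must eventually return a solution.

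The step I expect to be the main obstacle — and the one deserving the most care in the write-up — is the bookkeeping around \emph{which} $g$-value the extracted plan uses and showing its cost is bounded by the popped entry's stored $\gval$. I need: (i) the stored $\gval$ in an \open{} entry for a state is never smaller than the current $g$-value of that state (because $\Smerge_p$ only lowers $g$-values, so $g^t(s)$ is non-increasing in $t$, and $\gval$ was the current value at insertion time); and (ii) extracting from parent pointers at return time gives a real path of cost at most $g^{i,0}(g)$, which is the remark already flagged as "useful later" after Definition~\ref{def:progression-based-info-src}. With those two facts the chain cost$(\pi_{\mathrm{ret}}) \leq g^{i,0}(g) \leq \gval \leq \gval + \hval \leq \hstar(\Tinit) \leq \mathrm{cost}(\pi_{\mathrm{ret}})$ closes, forcing equality throughout. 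A secondary subtlety is confirming that re-evaluation does not interfere: if the popped goal entry were re-evaluated instead of expanded, it would be re-inserted (still with finite $h$, since \dynadmissible{} implies $\hdyn^{i,1}(g) \leq \hstar(g) = 0$, so in fact $\hval$ cannot strictly increase past $0$ and re-evaluation of a goal never triggers) — so a popped goal entry is always expanded, and the return in line~\ref{lin:dynastar-solvable} does fire.
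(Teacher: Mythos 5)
Your argument is correct and follows essentially the same route as the paper's proof: both combine Lemma~\ref{lem:f-bound} with the minimal-$f$ pop, the fact that stored $g$-values only decrease over time, and the parent-pointer path-cost bound to sandwich the returned cost at $\hstar(\Tinit)$ (the paper phrases it as a contradiction and uses $\hval = 0$ for a goal state where you use $\hval \ge 0$, a cosmetic difference). Your extra observations about re-evaluation never firing on a goal state and about termination are correct but not needed for the statement as given.
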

\begin{proof}
    Assume the algorithm terminated with a solution of cost $c > \hstar(\Tinit)$
    after popping $\tuple{s, g^\tins(s), \hdyn^\tins(s)}$ from \open{}
    at time $\tpop$. Then $g^{\tins}(s) + \hdyn^{\tins}(s) = g^{\tins}(s) + 0
    \ge g^\tpop(s) = c > \hstar(\Tinit)$. This contradicts
    Lemma~\ref{lem:f-bound} because $\dynastar$ pops the
    entry in \open{} with minimal $f$-value.
\end{proof}

The results in this chapter also apply to $\astar$ as a special case.
A corollary is that $\astar$ with an admissible static heuristic is also optimal
when using delayed duplicate detection.

\subsection{Reopening}

A classic result for $\astar$ is that states are never reopened when the
heuristic is consistent. One way to prove this is by showing that the sequence
of $f$-values of popped open list entries is monotonically increasing. If we
then consider the projection of this sequence to a single state, we can use the
fact that the heuristic value $h(s)$ of a static heuristic is constant to see
that multiple copies of a state are expanded in order of increasing $g$-value.
If the lowest $g$-value of a state is expanded first, the state will never be
reopened.

Interestingly, we can show that the intermediate result of monotonically
increasing $f$-values also holds for dynamic heuristics with the right
properties, but reopening may still occur. Since the intermediate result is not
useful in this setting, we only present the full proof in this extended version
and provide a sketch in the published paper \citep{christen-et-al-icaps2025}.

We start with an auxiliary result showing that the $f$-value of a state cannot
decrease between the time it is inserted into \open{} and the time it is popped.

\begin{lemma}\label{lem:f-value-greater-at-pop}
    Let $\tuple{s, \gval, \hval}$ be an element popped at time $\tpop$ by
    $\dynastar$ with a \dynmonotonic\ dynamic heuristic. If $s \notin \closed{}$
    at time $\tpop$, then $\gval + \hval \le g^\tpop(s) + h^\tpop(s)$.
\end{lemma}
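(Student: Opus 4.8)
The claim is that for an element $\tuple{s,\gval,\hval}$ popped at time $\tpop$ with $s\notin\closed$ at that time, the $f$-value at insertion is bounded by the $f$-value at pop time, i.e.\ $\gval+\hval \le g^\tpop(s)+h^\tpop(s)$. I would handle the two summands separately. For the $g$-part: $g$-values in $\src_p$ are maintained by $\Smerge_p$, which only ever keeps the smaller of two values, so $g^t(s)$ is nonincreasing in $t$ for every fixed state $s$. Since the entry was inserted at some time $\tins \le \tpop$ with $\gval = g^{\tins}(s)$ (entries are created in lines~\ref{lin:insert-init}, \ref{lin:reeval-insert}, \ref{lin:insert-new}, and~\ref{lin:insert-cheaper} using the current $g$-value), monotonicity of $g$ gives $\gval = g^{\tins}(s)\ge g^{\tpop}(s)$. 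That direction goes the \emph{wrong} way, so the real work is to show that the $h$-part grew by at least as much as the $g$-part shrank.

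For the $h$-part I would use \dynmonotonicity{}: $\hdyn(s,\cdot)$ never decreases under $\Supdate$ or $\Srefine$, and since the only modifications to $\infos(\src_h)$ between $\tins$ and $\tpop$ are such updates and refinements (lines~\ref{lin:end-refine-loop} and~\ref{lin:update}), we get $\hval = \hdyn^{\tins}(s)\le \hdyn^{\tpop}(s) = h^\tpop(s)$. Combined with the $g$-inequality this still does not immediately close the gap. The key observation is the case analysis on \emph{how} the entry was inserted. If $\gval = g^{\tins}(s) = g^{\tpop}(s)$ already (no cheaper path was found in between), then the two inequalities $\gval = g^\tpop(s)$ and $\hval \le h^\tpop(s)$ combine directly. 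The only way $g^\tpop(s) < \gval$ is if a strictly cheaper path to $s$ was discovered after $\tins$; but whenever that happens, lines~\ref{lin:insert-cheaper-check}--\ref{lin:insert-cheaper} insert a \emph{new} entry for $s$ with the reduced $g$-value (and, by \dynmonotonicity, an $h$-value at least $\hval$), so that fresh entry $\tuple{s, g', h'}$ has $g'+h' \le g^\tpop(s)+h^\tpop(s)$ by the same argument, and it has strictly smaller $g$-value, hence $f$-value $\le \gval+\hval$. I would argue that the popped entry at $\tpop$ is \emph{this} cheaper entry (or an even later one), not the stale one: since $s\notin\closed$ at $\tpop$ and the priority queue pops minimal $f$, and all entries for $s$ present after the cheaper path was found have $g$-value $g^\tpop(s)$, the entry actually popped has $\gval = g^\tpop(s)$ after all. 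So effectively the lemma reduces to: the popped entry's $\gval$ equals the current $g$-value $g^\tpop(s)$, and then \dynmonotonicity{} finishes it.

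The main obstacle is pinning down that last step cleanly --- ruling out that a \emph{stale} high-$g$ entry for $s$ is the one popped while $s\notin\closed$. I would establish an invariant: at any time, the minimum $g$-value over all \open{} entries for a not-yet-closed state $s$ equals the current $g^t(s)$, because every time $g^t(s)$ decreases a matching entry is inserted, and an entry is only removed from \open{} by being popped (and a pop with $s\notin\closed$ either closes $s$, contradicting our hypothesis at $\tpop$, or re-evaluates it, immediately reinserting an entry with the current $g$-value). Hence the $f$-minimal entry popped at $\tpop$ for a non-closed $s$ must carry $\gval = g^\tpop(s)$; together with $\hval = \hdyn^{\tins}(s) \le \hdyn^{\tpop}(s) = h^\tpop(s)$ from \dynmonotonicity{}, we conclude $\gval + \hval \le g^\tpop(s) + h^\tpop(s)$, as required. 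A subtle point to check is that all of this reasoning only invokes \emph{reachable} information objects, so that \dynmonotonicity{} applies --- which it does, since by Lemma~\ref{lem:reachable} and Theorem~\ref{thm:dynastar-instantiation} all information occurring in $\dynastar$ is reachable.
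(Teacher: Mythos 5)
There is a genuine gap in the final step, which you yourself flag as ``the main obstacle'': the claim that the entry popped at $\tpop$ must carry the current $g$-value, i.e.\ $\gval = g^\tpop(s)$. This is false. Under delayed duplicate detection the stale entry is \emph{not} removed from \open{} when a cheaper path is found; both entries coexist, and the queue is ordered by $f=g+h$, not by $g$. A stale entry can have a strictly \emph{smaller} $f$-value than the fresh one, because its outdated $h$-value is smaller (the heuristic only grows over time) and that difference can outweigh its larger $g$-value. Concretely: insert $\tuple{s,10,0}$ at $\tins$; later a cheaper path is found after the heuristic has improved, so $\tuple{s,8,5}$ is inserted; then $\tuple{s,10,0}$ with $f=10$ is popped before $\tuple{s,8,5}$ with $f=13$, while $s$ is still not closed and $\gval = 10 \neq 8 = g^\tpop(s)$. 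Your proposed invariant concerns the minimum \emph{$g$-value} over open entries of a non-closed state, but pops are by $f$-value, so it cannot deliver the conclusion; the intermediate assertion ``it has strictly smaller $g$-value, hence $f$-value $\le \gval+\hval$'' is a non sequitur for the same reason. You also skip the case where the cheaper path is found but no fresh entry is inserted because $h^\tupd(s)=\infty$ (line~\ref{lin:infty-check} precedes line~\ref{lin:insert-cheaper}); there the bound holds trivially since \dynmonotonicity{} forces $h^\tpop(s)=\infty$.

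The lemma is still true in the scenario above, but for a reason your argument does not supply, and this is the heart of the paper's proof. Let $\tupd$ be the time the $g$-value drops to $g^\tpop(s)$ and suppose the fresh entry $\tuple{s, g^\tupd(s), h^\tupd(s)}$ is inserted. If $\gval+\hval \le g^\tupd(s)+h^\tupd(s)$, then $g^\tupd(s)=g^\tpop(s)$ and \dynmonotonicity{} ($h^\tupd(s)\le h^\tpop(s)$) finish the proof --- this covers the counterexample. Otherwise the fresh entry has strictly smaller $f$-value than the entry popped at $\tpop$, so it cannot still be on \open{} then; it must have been popped earlier, and since $s$ keeps the same $g$-value between $\tupd$ and $\tpop$ and is not closed at $\tpop$, that earlier pop can neither have closed $s$ nor found it closed --- it must have re-evaluated $s$, reinserting an entry with the current $g$-value $g^\tpop(s)$ and a monotonically larger $h$-value. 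Iterating this argument, some such entry survives to $\tpop$, and $f$-minimality of the popped entry against \emph{that} entry yields $\gval+\hval \le g^\tpop(s)+h^\tpop(s)$. Your plan is missing this case split on the relative $f$-values of the stale and fresh entries and the re-evaluation argument that resolves the low-$f$ branch.
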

\begin{proof}
    Let $\tins$ be the last time before $\tpop$ when an entry $\tuple{s, \gval,
    \hval}$ was added to \open{}, i.e., $\gval = g^\tins(s)$ and $\hval =
    h^\tins(s)$.

    If $g^\tins(s) = g^\tpop(s)$ then the statement follows from
    \dynmonotonicity\ of $h$, otherwise the $g$-value of $s$ was decreased to
    $g^\tpop(s)$ at some time $\tupd$ with $\tins < \tupd < \tpop$. If
    $h^\tupd(s) = \infty$ then $h^\tpop(s) = \infty$, satisfying the lemma.
    Otherwise, $\tuple{s, g^\tupd(s), h^{\tupd}(s)}$ is inserted into \open{}.

    If $g^\tins(s) + h^\tins(s) \le g^\tupd(s) + h^\tupd(s)$, then the result
    follows from $g^\tupd(s) = g^\tpop(s)$ and the monotonicity of $h$.
    Otherwise, $\tuple{s, g^\tupd(s), h^{\tupd}(s)}$ can no longer be
    on \open{} at time $\tpop$ because of its lower $f$-value.

    However, since $s \notin \closed$ at times $\tupd$ and $\tpop$, and because
    $s$ has the same $g$-value at these times, it could not have been added to
    closed and reopened in this interval. The only other reason why $\tuple{s,
    g^\tupd(s), h^{\tupd}(s)}$ is no longer on \open{} is that $s$ was
    re-evaluated at time $\tre$ and a resulting entry $\tuple{s, g^\tre(s),
    h^{\tre}(s)}$ is still on \open{} at time $\tpop$. For this entry we have
    $g^{\tre}(s) = g^{\tpop}(s)$ because $\tupd < \tre < \tpop$ and $h^{\tre}(s)
    \le h^{\tpop}(s)$ because $\tre < \tpop$, and $h$ is monotonic.

    At time $\tpop$, we then have $g^{\tins}(s) + h^{\tins}(s) \le g^{\tre}(s) +
    h^{\tre}(s) \le g^{\tpop}(s) + h^{\tpop}(s)$.
\end{proof}

We can use Lemma~\ref{lem:f-value-greater-at-pop} to show that $f$-values popped
by $\dynastar$ are non-decreasing with similar restrictions to the heuristic as
in the static case.

\begin{theorem}\label{the:monotonic-f}
    The sequence of $f$-values popped by $\dynastar$ from $\open$ is
    non-decreasing if it uses a \dynadmissible{}, \dynmonotonic{}, and
    \dynconsistent{} dynamic heuristic.
\end{theorem}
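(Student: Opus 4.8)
The plan is to adapt the classical argument that $f$-values popped by $\astar$ with a consistent heuristic are non-decreasing, accounting for the complications introduced by dynamic heuristics and delayed duplicate detection. I would argue by contradiction: suppose $\dynastar$ pops an entry $\tuple{s_2, g_2, h_2}$ at time $\tpop_2$ with $f$-value $g_2 + h_2$ strictly less than the $f$-value $g_1 + h_1$ of the entry $\tuple{s_1, g_1, h_1}$ popped immediately before at time $\tpop_1$. We may assume $s_2 \notin \closed$ at time $\tpop_2$, since otherwise the entry is simply discarded in line~\ref{lin:closed-check} and contributes nothing to the sequence of popped $f$-values. Likewise, if the popped entry is re-evaluated, it is re-inserted with a larger $h$-value, so we may focus on the genuinely ``effective'' pops.

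The key step is to locate, at time $\tpop_1$ (just before $s_2$ is popped), an open-list entry that already witnesses a low $f$-value, and to show this forces a contradiction with the priority-queue discipline. First I would use Lemma~\ref{lem:f-value-greater-at-pop}: since $s_2 \notin \closed$ at $\tpop_2$, we get $g_2 + h_2 \le g^{\tpop_2}(s_2) + h^{\tpop_2}(s_2)$, so it suffices to bound $g^{\tpop_2}(s_2) + h^{\tpop_2}(s_2)$ from below by $g_1 + h_1$. Now consider the state $s_1$ that was just popped and (after the $\refine$ in lines~\ref{lin:refine-loop}--\ref{lin:end-refine-loop}) expanded at time $\tpop_1$; by \dynadmissibility{} we know $h^{\tpop_1}(s_1) \le \hstar(s_1) < \infty$, and $g^{\tpop_1}(s_1) + h^{\tpop_1}(s_1) \ge g_1 + h_1$ would follow from monotonicity and the same case analysis as in Lemma~\ref{lem:f-value-greater-at-pop} — but actually the cleaner route is: $s_1$ was the minimum-$f$ entry on $\open$ at time $\tpop_1$, hence $g_1 + h_1 \le g^{\tpop_1}(s_2') + h^{\tpop_1}(s_2')$ for whatever entry for $s_2$ (or its predecessor on an optimal-continuation-style path) sits on $\open$ at that time. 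To connect $\open$ at $\tpop_1$ to the value $g^{\tpop_2}(s_2) + h^{\tpop_2}(s_2)$, I would track the entry for $s_2$ that is eventually popped: either it was already on $\open$ at $\tpop_1$ (so $g_1 + h_1 \le g^{\tins}(s_2) + h^{\tins}(s_2)$, and then \dynmonotonicity{} plus the $g$-update/re-evaluation bookkeeping from Lemma~\ref{lem:f-value-greater-at-pop} pushes this up to $g^{\tpop_2}(s_2) + h^{\tpop_2}(s_2)$), or $s_2$ was inserted after $\tpop_1$, necessarily as a successor generated during the expansion of $s_1$ (or a later expansion, but the immediately-following-pop hypothesis rules out intervening expansions) — in which case $g^{\tpop_2}(s_2) \ge g^{\tpop_1}(s_1) + \Tcost(\ell)$ for the relevant transition, and \dynconsistency{} gives $h^{\tpop_1}(s_1) \le \Tcost(\ell) + h(s_2, \cdot)$, yielding $g^{\tpop_1}(s_1) + h^{\tpop_1}(s_1) \le g^{\tpop_2}(s_2) + h^{\tpop_2}(s_2)$ after another appeal to monotonicity to move the heuristic value forward in time.

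The main obstacle I anticipate is the second case — the ``freshly generated successor'' case — because it is exactly where \dynconsistency{} must be invoked across a change in information, and the heuristic value of $s_2$ used at generation time (some time $\tuple{\tpop_1\text{-iteration}, j}$) is not the value at $\tpop_2$. Reconciling these requires carefully threading \dynmonotonicity{} through every subsequent $\refine$, $\update$, and re-evaluation touching $s_2$, and also handling the possibility that the generating expansion at $\tpop_1$ itself changed $\infos(\src_h)$ before $h(s_2, \cdot)$ was first read. A secondary subtlety is that with delayed duplicate detection the predecessor $s_1$ may have an outdated $g$-value at pop time; one must be careful that $g^{\tpop_1}(s_1)$ — the value actually used to compute successor $g$-values during this expansion — is the quantity that appears consistently in both bounds. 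I would organize the write-up so that Lemma~\ref{lem:f-value-greater-at-pop} absorbs all the ``information-drift'' bookkeeping for a single state, leaving the theorem's proof to combine two clean inequalities (priority-queue minimality at $\tpop_1$, and the consistency/monotonicity chain linking $s_1$ to $s_2$).
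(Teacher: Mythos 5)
There is a genuine gap: you apply Lemma~\ref{lem:f-value-greater-at-pop} in the wrong direction. The lemma says that the \emph{stored} $f$-value of a popped entry is at most its \emph{current} $f$-value at pop time, i.e., $g_2 + h_2 \le g^{\tau}(s_2) + h^{\tau}(s_2)$ for the later pop time $\tau$. Lower-bounding the right-hand side by $g_1 + h_1$, as you propose, therefore tells you nothing about $g_2 + h_2$ -- and $g_2+h_2$ is the quantity the theorem is about, since the sequence of popped $f$-values consists of the values stored at insertion time. By \dynmonotonicity{} (and because $g$-values only decrease), the stored value can be strictly below the current value at pop time, so the implication you need does not follow. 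Your consistency/monotonicity chain in the ``freshly generated successor'' case has the same defect: it terminates at $g^{\tau}(s_2) + h^{\tau}(s_2)$, and the final ``appeal to monotonicity to move the heuristic value forward in time'' moves the bound the wrong way, because the entry is inserted with the \emph{earlier}, hence smaller, heuristic value.

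The fix is to aim the chain at the insertion-time $f$-value of each newly inserted entry. This is what the paper does: it shows that every entry inserted during an iteration (by re-evaluation or as a successor) has stored $f$-value at insertion time $\tsuc$ at least the $f$-value popped in that iteration. Concretely, with $\tins < \tpop < \tsuc$ for the expanded state $s$ and successor $s'$ via transition label $\ell$, the chain is $g^\tins(s) + h^\tins(s) \le g^\tpop(s) + h^\tpop(s)$ (Lemma~\ref{lem:f-value-greater-at-pop} applied to the \emph{expanded} state, where it points the useful way) $\le g^\tsuc(s) + h^\tsuc(s)$ ($g$ unchanged during expansion, plus \dynmonotonicity{}) $\le g^\tsuc(s) + \Tcost(\ell) + h^\tsuc(s') = g^\tsuc(s') + h^\tsuc(s')$ (\dynconsistency{}, and the entry is only inserted when this transition realizes the best known path to $s'$). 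Combined with priority-queue minimality for entries already on \open{}, this covers all future pops, including those discarded because the state is closed -- a case your reduction dismisses with the incorrect claim that such pops ``contribute nothing to the sequence''; they are popped and hence part of the sequence, but the invariant formulation handles them for free. Your first case (entry already on \open{} at the earlier pop) is fine and needs neither the lemma nor monotonicity.
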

\begin{proof}
    We show that any entry inserted into \open{} has an $f$-value at least as
    high as the value that was popped in the same iteration. Since the popped
    element had the minimal $f$-value at the time, this is sufficient to show
    that future pops cannot have lower $f$-values.

    For re-evaluations, Lemma~\ref{lem:f-value-greater-at-pop} shows what we
    want. For insertion of successor states, we consider three time steps $\tins
    < \tpop < \tsuc$. At time $\tpop$ an element $\tuple{s, \gval, \hval}$ is
    popped from \open{} (Algorithm~2, line~9 \citep{christen-et-al-icaps2025})
    which was inserted earlier at time $\tins$ (i.e., $\gval = g^\tins(s)$ and
    $\hval = h^\tins(s)$). Time $\tsuc$ is in the same iteration as $\tpop$
    while expanding $s$ and inserting entry $\tuple{s', g^\tsuc(s'),
    h^\tsuc(s')}$ for a successor $\tuple{s, \ell, s'}$ of $s$.
    \begin{align}
        g^\tins(s) + h^\tins(s)
        &\le g^\tpop(s) + h^\tpop(s)   \label{eq:monotonic-f:lemma}\\
        &\le g^\tsuc(s) + h^\tpop(s) \label{eq:monotonic-f:g-unchanged}\\
        &\le g^\tsuc(s) + h^\tsuc(s) \label{eq:monotonic-f:monotonicity}\\
        &\le g^\tsuc(s) + \Tcost(\ell) + h^\tsuc(s') \label{eq:monotonic-f:consistency}\\
        &= g^\tsuc(s') + h^\tsuc(s') \label{eq:monotonic-f:cheapest-path}
    \end{align}
    Step~\eqref{eq:monotonic-f:lemma} follows from
    Lemma~\ref{lem:f-value-greater-at-pop};
    step~\eqref{eq:monotonic-f:g-unchanged} is true because the $g$-value of $s$
    is not changed while expanding $s$;
    step~\eqref{eq:monotonic-f:monotonicity} holds because of
    \dynmonotonicity{}; step~\eqref{eq:monotonic-f:consistency} holds because of
    \dynconsistency{}; and step~\eqref{eq:monotonic-f:cheapest-path} holds
    because we only insert an entry for $\tuple{s, \ell, s'}$ if this transition
    provides the best known path to $s'$ at the time.
\end{proof}
Note that Theorem~\ref{the:monotonic-f} requires consistency only between known
states, thus it also holds for partially \dynconsistent{} heuristics.

\begin{figure}
  \centering
  \begin{tikzpicture}
    \tikzset{%
        y=1.6cm,
        x=1.05cm,
        state/.style={state common, text width=9mm},
    }
    \node[state] (a) at (0, 0) {$A$\\{\small $1$}};
    \node[state] (b) at (2, 0) {$B$\\{\small $0 \to 1$}};
    \node[state] (c) at (.8, -1) {$C$\\{\small $0 \to 1$}};
    \node[state] (d) at (4, 0) {$D$\\{\small $0 \to 3$}};
    \node[state] (e) at (3.2, -1) {$E$\\{\small $0 \to 4$}};
    \node[state, goal] (f) at (6, 0) {$F$\\{\small $0$}};

    \draw[trans] (-1.05, 0) to (a);
    \draw[emph trans] (a) to node[trans label] {\small 1} (b);
    \draw[emph trans] (a) to node[trans label, inner sep=1pt] {\small 2} (c);
    \draw[trans, to goal] (a) to[bend left=25] node[trans label] {\small 8} (f);
    \draw[trans] (a) to[bend right=30] node[trans label] {\small 6} (d);
    \draw[emph trans] (b) to node[trans label] {\small 5} (d);
    \draw[emph trans] (c) to node[trans label] {\small 1} (e);
    \draw[trans, to goal] (d) to node[trans label] {\small 3} (f);
    \draw[trans] (e) to node[trans label, inner sep=1pt] {\small 1} (d);
\end{tikzpicture}
  \caption{Example state space, showing that reopening can occur with
    \dynconsistent{} heuristics. Edge costs are written on the edges, heuristic
    values are written inside the states. The heuristic is dynamic and the
    heuristic value of a state with label $x \to y $ changes from $x$ to $y$
    once it is reached along the bold incoming edge.}
  \label{fig:reopening-counterexample}
\end{figure}
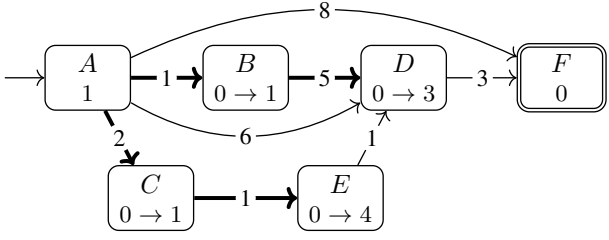

Without re-evaluation $\dynastar$ may reopen states even if the heuristic
satisfies the conditions of Theorem~\ref{the:monotonic-f}. In the example in
Figure~\ref{fig:reopening-counterexample}, states are expanded in the order $A,
B, C, D, E, F$. When $A$ is expanded, $F$ is added to the open list with the
suboptimal $f$-value of $8+0$ and $D$ is added with a suboptimal value of $6+0$.
While $B$ and $C$ are added to the open list, their heuristic values increase from
$0$ to $1$. Next $B$ is expanded and discovers a new path to $D$ along which we
find information to improve the heuristic value of $D$ to $3$. This path is not
cheaper so $D$ remains on \open{} with a value of $6+0$. After expanding $C$ and
increasing the heuristic value of $E$, \open{} is $\left\langle\tuple{D, 6, 0},
\tuple{E, 3, 4}, \tuple{F, 8, 0}\right\rangle$. State $D$ is expanded first but
adds nothing to $\open$ because $F$ is reached through a more expensive path
than the one found previously. Then $E$ is expanded and finds the optimal path
to $D$. If we do not reopen $D$ at this point, the algorithm terminates with the
suboptimal path $\tuple{\tuple{A, \cdot, F}}$.

Note that $f$-values of popped states
increase monotonically, the heuristic is monotonically increasing, admissible
and consistent throughout the search, and only updated along transitions (i.e.,
progression-based).

At the time $D$ was expanded, its heuristic value was higher than the one we
used to insert it. If we would have re-evaluated $D$ at that time, it would have
been re-inserted into \open{} with a value of $6+3$ and we would have expanded
$E$ first, finding a cheaper path to $D$ before expanding it. In that case, no
reopening happens. We now show that this is the case in general with
re-evaluation.

\begin{theorem}\label{the:dynastar-no-re-exp}
    $\dynastar$ with \reeval{} enabled and using a \dynmonotonic{},
    \dynconsistent{} dynamic heuristic does not reopen states.
\end{theorem}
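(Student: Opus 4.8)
The plan is to establish the equivalent statement that $\dynastar$ with re-evaluation never closes a state $s$ whose stored $g$-value exceeds $\gstar(s)$. This suffices: reopening in line~\ref{lin:reopen} occurs only when the guard $\oldg > g^{i,j}(s')$ of line~\ref{lin:insert-cheaper-check} holds for a state $s'$ that is already in \closed{}; but if every closed state was closed with its optimal $g$-value, then---since $g$-values are non-increasing and bounded below by $\gstar(s')$---that guard can never fire for a closed state, so no reopening happens.

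For the contradiction, let $s$ be the first state, in order of expansion, that gets expanded with a suboptimal stored $g$-value; say this happens in iteration $i$ after popping $\tuple{s,\gval,\hval}$, so $\gval \ge g^{i,0}(s) > \gstar(s)$. The key first step is to pin down the popped entry. Since re-evaluation is enabled and $s$ is expanded rather than re-inserted, the guard of line~\ref{lin:reeval-check} fails, so $\hval \ge h^{i,1}(s)$; and since the entry's insertion time precedes $\tuple{i,1}$, \dynmonotonicity{} gives $\hval \le h^{i,1}(s)$; hence $\hval = h^{i,1}(s)$ (and $g^{i,1}(s) = g^{i,0}(s)$, as refining $\src_p$ does nothing). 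By minimality of $s$, every state expanded before iteration $i$ was closed with its optimal $g$-value; in particular $s$ was not expanded earlier (otherwise its stored $g$-value would still be optimal now), so $s$ is not settled in iteration $i$, and moreover $i \ge 2$ since $\Tinit$ is expanded in iteration~$1$ with the optimal $g$-value $0$ and is thus settled in iteration $i$.

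Now fix an optimal path $\Tinit = u_0,\dots,u_\ell = s$ and let $u_k$ (with $k \ge 1$, as $u_0 = \Tinit$ is settled) be its first vertex not settled in iteration $i$. Then $u_{k-1}$ is settled, and since prefixes and hence complementary suffixes of optimal paths are optimal, $u_k$ is reached optimally through $u_{k-1}$. Applying Lemma~\ref{lem:optimal-continuation} with $s \mapsto u_{k-1}$, $s' \mapsto u_k$ (when $\hdyn$ is not \dynsafe{}, a variant is needed in which the finiteness of $h$ at $u_k$ at the relevant insertion time---all that the assumptions ``$\hdyn$ is \dynsafe{}'' and ``$\hstar(u_k)<\infty$'' are used for in the proof of that lemma, namely to keep $u_k$ from being pruned when first reached optimally---is instead obtained from \dynconsistency{} and \dynmonotonicity{} together with the finiteness of $h$ at $s$), and using that $u_k$ is not settled in iteration $i$, we obtain an entry $\tuple{u_k, \gstar(u_k), h^{t_0}(u_k)}$ on \open{} in iteration $i$, with insertion time $t_0 < \tuple{i,0}$ and $h^{t_0}(u_k) < \infty$. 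Since $\tuple{s,\gval,\hval}$ is popped with minimal $f$-value, $\gval + \hval \le \gstar(u_k) + h^{t_0}(u_k)$. Applying \dynconsistency{} along $u_k,\dots,u_\ell = s$ at the reachable information $\infos^{t_0}(\src_h)$ gives $h^{t_0}(u_k) \le (\gstar(s) - \gstar(u_k)) + h(s, \infos^{t_0}(\src_h))$, and \dynmonotonicity{} gives $h(s, \infos^{t_0}(\src_h)) \le h^{i,1}(s) = \hval$; combining, $\gstar(u_k) + h^{t_0}(u_k) \le \gstar(s) + \hval < g^{i,0}(s) + \hval = g^{i,1}(s) + \hval \le \gval + \hval$, which contradicts $\gval + \hval \le \gstar(u_k) + h^{t_0}(u_k)$.

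I expect the first step of the second paragraph to be the main obstacle: delayed duplicate detection and re-evaluation both put entries on \open{}, so a popped entry may carry outdated $g$- and $h$-values, and it takes the precise interaction of ``re-evaluation enabled'' with \dynmonotonicity{} to conclude that its $h$-component is the current one while its $g$-component is still at least the current (suboptimal) $g$-value---which is exactly what lets the entry be played off against the optimal-continuation entry. A secondary subtlety is that Lemma~\ref{lem:optimal-continuation} as stated assumes a \dynsafe{} heuristic, which is not among our hypotheses, so one must supply the variant sketched above (this also covers the dead-end case $\hstar(u_k) = \infty$).
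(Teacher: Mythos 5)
Your proof is correct, but it takes a genuinely different route from the paper's. The paper argues directly by contradiction from a reopening event: it follows the parent pointers of the reopened state $s_n$ back to the last state $s_j$ whose stored $g$-value still matches the cost of that parent-pointer path, shows that an entry for $s_j$ with that $g$-value must still be on \open{} when $s_n$ is closed (this is where \reeval{} is needed, to keep $s_j$ from vanishing from \open{}), and then chains \dynmonotonicity{} and \dynconsistency{} along $s_j, \dots, s_n$ to contradict the $f$-minimality of the popped entry; the OPTEX property is only remarked upon afterwards as a consequence. You reverse this order: you establish OPTEX first (no state is ever closed with a suboptimal $g$-value) and obtain no-reopening as an immediate corollary, reusing Lemma~\ref{lem:optimal-continuation} to produce the competing open-list entry $\tuple{u_k, \gstar(u_k), \cdot}$ rather than constructing it by hand. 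Both proofs rest on the same two load-bearing observations: the failed guard of line~\ref{lin:reeval-check} pins the popped entry's $h$-component to the current heuristic value (step~\eqref{eq:dynastar-no-reexp:not-reevaluated} in the paper), and consistency plus monotonicity are chained along a path ending at the offending state. Your route buys a cleaner primary statement (OPTEX) and reuse of an existing lemma, at the price of having to supply a variant of Lemma~\ref{lem:optimal-continuation} without the \dynsafe{} hypothesis; you correctly identify that \dynsafe{} and $\hstar(s') < \infty$ are used in that proof only to prevent $s'$ from being pruned, and that at every relevant insertion time (all of which precede iteration $i$) the finiteness of $h$ at $u_k$ instead follows from \dynconsistency{} and \dynmonotonicity{} together with $h^{i,1}(s) < \infty$, which holds for any expanded state. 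One small slip: with \reeval{} enabled, $\Tinit$ need not be expanded in iteration~$1$ (it may be re-inserted by line~\ref{lin:reeval-check} repeatedly); however, since no other state can appear on \open{} before $\Tinit$ has been expanded, $\Tinit$ is still necessarily the first state ever expanded, with $g$-value $0 = \gstar(\Tinit)$, so it is settled by iteration $i$ --- which is all your argument actually needs.
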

\begin{proof}
    Assume that a state $s_n$ is reopened at time $\treopen$. Consider the path
    $\tuple{t_1, \dots, t_n}$ to $s_n$ where $t_i = \tuple{s_{i-1}, \ell_i, s_i}$
    is the parent pointer of $s_i$ at time $\treopen$ for all $0 < i \le n$.
    State $s_n$ must have been added to \closed{} at some earlier time $\tclose
    < \treopen$. Since $g^\tclose(s_0) = \Tcost(\tuple{}) = 0$ and
    $g^\tclose(s_n) > \Tcost(\tuple{t_1, \dots, t_n})$, there must be a smallest
    $j$ such that $s_{j+1}$ satisfies $g^\tclose(s_{j+1}) > \Tcost(\tuple{t_1,
    \dots, t_{j+1}})$. For all $s_i$ with $i \le j$, we then know $g^\tclose(s_i)
    = \Tcost(\tuple{t_1, \dots, t_j})$.

    Note that $h^\tclose(s_n) < \infty$ and thus $h^\tclose(s_i) < \infty$ for
    all $i < n$ due to \dynconsistency. Also, since $h$ is \dynmonotonic,
    $h^t(s_i) < \infty$ for all times $t < \tclose$.

    At some time $\texp < \tclose$ the $g$-value of $s_j$ was set to
    $\Tcost(\tuple{t_1, \dots, t_j})$ and at that time, state $s_j$ must have
    been added to \open{}. Between $\texp$ and $\tclose$ the algorithm could not
    have expanded $s_j$ because that would explore transition $t_{j+1}$ and
    contradict $g^\tclose(s_{j+1}) > \Tcost(\tuple{t_1, \dots, t_{j+1}})$. Any
    re-evaluation of $s_j$ that occurs between $\texp$ and $\tclose$ will
    insert $s_j$ back into \open{} because $h^\texp(s_j) < \infty$. Thus at time
    $\tclose$ there is an entry $\tuple{s_j, g^\tre(s_j), h^\tre(s_j)}$ on
    \open{} that was added at a time $\tre$ with $\texp \le \tre < \tclose$.

    At time $\tclose$ an entry $\tuple{s_n, g^\tins(s_n), h^\tins(s_n)}$ is
    popped and expanded that was added earlier at time $\tins < \tclose$.
    \begin{align}
        &g^\tins(s_n) + h^\tclose(s_n) \nonumber\\
        \le{} &g^\tins(s_n) + h^\tins(s_n) \label{eq:dynastar-no-reexp:not-reevaluated}\\
        \le{} &g^\tre(s_j) + h^\tre(s_j) \label{eq:dynastar-no-reexp:min-f}\\
        ={} &\Tcost(\tuple{t_1, \dots, t_j}) + h^\tre(s_j) \label{eq:dynastar-no-reexp:def-j}\\
        \le{} &\Tcost(\tuple{t_1, \dots, t_j}) + h^\tclose(s_j) \label{eq:dynastar-no-reexp:monotonicity}\\
        \le{} &\Tcost(\tuple{t_1, \dots, t_j}) + \Tcost(\tuple{t_{j+1}, \dots, t_n}) + h^\tclose(s_n) \label{eq:dynastar-no-reexp:consistency}\\
        \le{} &g^\treopen(s_n) + h^\tclose(s_n) \label{eq:xy}
    \end{align}
    Step~\eqref{eq:dynastar-no-reexp:not-reevaluated} holds because $s_n$ was
    expanded and not re-evaluated at time $\tclose$;
    step~\eqref{eq:dynastar-no-reexp:min-f} holds because $\tuple{s_j,
    g^\tre(s_j), h^\tre(s_j)}$ was on \open{} but did not have minimal $f$-value
    at time $\tclose$; step~\eqref{eq:dynastar-no-reexp:def-j} holds because the
    $g$-value of $s_j$ matches $\Tcost(\tuple{t_1, \dots, t_j})$ for all times
    between $\texp$ and $\tclose$;
    step~\eqref{eq:dynastar-no-reexp:monotonicity} holds by
    \dynmonotonicity; step~\eqref{eq:dynastar-no-reexp:consistency} holds by
    \dynconsistency; and step~\eqref{eq:xy} holds because the cost of the path
    defined through parent pointers is bounded by the stored $g$-values.

    In summary, we have $g^\tins(s_n) + h^\tclose(s_n) \le g^\treopen(s_n) +
    h^\tclose(s_n)$, so $g^\tins(s_n) \le g^\treopen(s_n)$. This contradicts
    that $s_n$ is reopened at time $\treopen$.
\end{proof}
Note that this proof only relies on \dynconsistency{} between known states so
this result also holds for partially \dynconsistent{} heuristics. Furthermore,
\citet{koyfman-et-al-aamas2024} define the property OPTEX for algorithms where
every expanded state has an optimal $g$-value. If the conditions of
Theorem~\ref{the:dynastar-no-re-exp} are satisfied, $\dynastar$ is OPTEX.

\section{Applications}

We now show how existing approaches, primarily from classical planning, fit our
framework. We argue more formally for the first application and go into less
detail for the others.

\subsection{Interleaved Search}

\citet{franco-torralba-socs2019} consider \emph{interleaved search}, where
search and computation of an abstraction heuristic are alternated in
progressively longer time slices. They refer to a generic function
\texttt{HeuristicImprovement} (\texttt{HI}) that returns an improved abstraction
within a given time limit, potentially using additional information gained
during search.

We represent this idea as a dynamic heuristic by first defining the information
source $\srcis$. Given a function \textup{\texttt{HI}} that takes a time limit
and information from an information source $\srcadd$, the information space
$\infospace[\srcis]$ contains tuples $\tuple{t, l, \alpha, D, \info}$, where $t$
tracks search time, $l$ is the current time limit, $\alpha$ the current
abstraction with abstract distance table $D$, and $\info \in
\infospace[\srcadd]$ is additional information.
The constant $\Sinitinfo_\srcis$ is $\tuple{0, l_0, \alpha_0, D_0,
\Sinitinfo_{\srcadd}}$ where \texttt{HI}$(l_0, \Sinitinfo_{\srcadd})$ returned
$\alpha_0$ and $D_0$.
Calling $\Supdate_{\srcis}$ just replaces $\info$ with
$\Supdate_{\srcadd}(\info)$ and leaves the other components unchanged.
Calling $\Srefine_{\srcis}(\tuple{t, l, \alpha, D, \info_{\srcadd}}, s)$ adds the time
passed since the last call to $\Srefine_{\srcis}$ (or since program start) to
$t$. If $t > l$, it sets $\alpha$ and $D$ to the result of \texttt{HI}$(l,
\info_{\srcadd})$, sets $l$ to $2\cdot l$, and finally sets $t$ to $0$.
The dynamic heuristic $\his$ then calculates the heuristic value
of a state $s$ as $D(\alpha(s))$.

The properties of $\his$ naturally depend on \texttt{HI}.
\citeauthor{franco-torralba-socs2019} describe an implementation based on
symbolic pattern databases \citep[PDBs;][]{edelkamp-aips2002}, let us call it
$\hisft$, that samples states from the open list to guide the pattern selection.
We can approximate this with a nested information source $\srcadd$ that tracks
sampled states in $\Supdate_{\srcadd}$.
They enforce \dynmonotonicity{} by maximizing over known PDBs,
\dynadmissibility{} and \dynconsistency{} are also guaranteed as each
improvement call returns either a partial PDB \citep{anderson-et-al-sara2007} or
a full PDB, making the resulting heuristics admissible and consistent.

\begin{theorem}\label{the:interleaved-optimality}
    Interleaved search as described by \citet{franco-torralba-socs2019} is
    optimal and does not reopen states.
\end{theorem}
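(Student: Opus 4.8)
The plan is to exhibit interleaved search as the instantiation of $\dynastar$ with $\srcs = \{\src_p, \srcis\}$, heuristic $\hisft$ over $\srcis$, and the \reeval{} flag set, and then read off the two claims directly from Theorems~\ref{the:dynastar-optimality} and~\ref{the:dynastar-no-re-exp}. So the first step is to check that the algorithm of \citet{franco-torralba-socs2019} really is such an instantiation. Each step of their search (pop a state, possibly prune, generate successors) is one iteration of $\dynastar$, and each invocation of \texttt{HI} is one $\Srefine_{\srcis}$ operation in the refine loop that $\dynastar$ runs after popping. The time-slice structure of the original algorithm --- alternating a search phase with a computation phase of growing length --- is absorbed into the components $t$ and $l$ of $\infospace[\srcis]$: $\Srefine_{\srcis}$ only actually calls \texttt{HI} once enough search time has accumulated, so giving $\dynastar$ one refine opportunity per iteration rather than a dedicated phase does not change the sequence of abstractions produced. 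Because interleaved search re-inserts open-list entries whose heuristic improved after a \texttt{HI} call, the \reeval{} flag is on; because $\dynastar$ only ever refines on states it has already popped (hence known) and only updates along transitions out of known states, the structural constraints of the framework hold, and Theorem~\ref{thm:dynastar-instantiation} confirms this is a legal instantiation (the heuristic is \dynsafe{}, being even \dynadmissible{}, see below).

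The second step is to verify that $\hisft$ is \dynadmissible{}, \dynconsistent{}, and \dynmonotonic{}. For any reachable information object the heuristic equals the pointwise maximum, over the PDBs built by the \texttt{HI} calls so far, of the functions $s \mapsto D(\alpha(s))$; each such function is admissible and consistent because \texttt{HI} returns either a full PDB or a partial PDB \citep{anderson-et-al-sara2007}, and pointwise maxima of admissible consistent heuristics are again admissible and consistent, so the inequalities defining \dynadmissibility{} and \dynconsistency{} --- which fix a single reachable $\info$ --- hold. \dynmonotonicity{} holds because $\Supdate_{\srcis}$ leaves $\alpha$ and $D$ unchanged (it only records sampled states in the nested $\srcadd$, which $\hisft$ ignores) and $\Srefine_{\srcis}$ can only enlarge the PDB collection, so no value ever decreases. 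With these three properties in hand, Theorem~\ref{the:dynastar-optimality} yields that returned solutions are optimal, and Theorem~\ref{the:dynastar-no-re-exp} --- whose hypotheses (\reeval{} enabled, \dynmonotonic{}, \dynconsistent{}) are exactly what we have --- yields that no state is reopened.

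The hard part is not any computation but the faithfulness of the modelling in the first step: one must be satisfied that the published interleaved search indeed re-evaluates open-list entries (otherwise, by the counterexample of Figure~\ref{fig:reopening-counterexample}, reopening could occur and only the optimality half would survive), that \texttt{HI} is never applied in a way that depends on states unknown to the search, and that the $t$/$l$ bookkeeping inside $\srcis$ reproduces the original abstraction sequence exactly. A smaller point to pin down is the admissibility and consistency of each \texttt{HI} output, which is standard for full PDBs and follows from \citet{anderson-et-al-sara2007} for partial PDBs, together with the observation that the maximization used for monotonicity preserves both. We do not claim completeness: nothing forces the chain of \texttt{HI} refinements to converge, so the completeness result for $\dynastar$ (which needs converging \refine{} chains) does not apply, and the statement only concerns optimality of returned solutions and the absence of reopening.
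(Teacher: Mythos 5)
Your proposal is correct and follows the same route as the paper: model interleaved search as $\dynastar$ with \reeval{} using $\hisft$, establish \dynadmissibility{}, \dynconsistency{}, and \dynmonotonicity{} from the fact that each \texttt{HI} call returns a (partial) PDB and values are maximized over known PDBs, then apply Theorems~\ref{the:dynastar-optimality} and~\ref{the:dynastar-no-re-exp}. You simply spell out the modelling and the heuristic properties in more detail than the paper's brief proof does.
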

\begin{proof}
    $\dynastar$ with \reeval{} using $\hisft$ with the
    \texttt{HI} function defined by \citeauthor{franco-torralba-socs2019} models
    the behavior of interleaved search. Optimality then follows from
    Theorem~\ref{the:dynastar-optimality}
    together with the \dynadmissibility{} of $\hisft$. Finally no states are
    reopened due to Theorem~\ref{the:dynastar-no-re-exp} together with the
    \dynmonotonicity{} and \dynconsistency{} of $\hisft$.
\end{proof}

This confirms \citeauthor{franco-torralba-socs2019}'s conjecture that
re-evaluation and monotonicity (plus \dynadmissibility{} and \dynconsistency{})
ensure that no states are reopened.

\subsection{Online Cartesian Abstraction Refinement}

\citet{eifler-fickert-socs2018} refine additive Cartesian abstractions
\citep{seipp-helmert-icaps2014}, combined via cost partitioning, during search.
Their refinement strategy guarantees \dynadmissibility{}, \dynconsistency{}, and
\dynmonotonicity{}. It is called for states popped from \open{} when a Bellman
optimality equation detects a local error. The resulting heuristic is then used
in $\astar$ with re-evaluation. We can define a suitable information source in a
way analogous to $\srcis$ and show a result similar to
Theorem~\ref{the:interleaved-optimality}.
\begin{theorem}
    $\astar$ with re-evaluation using online Cartesian abstraction refinement as
    described by \citet{eifler-fickert-socs2018} is optimal and does not reopen
    states.
\end{theorem}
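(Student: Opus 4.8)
The plan is to proceed exactly as for interleaved search (Theorem~\ref{the:interleaved-optimality}): build an information source that packages the refinement state, argue that $\dynastar$ with \reeval{} instantiated with the corresponding dynamic heuristic simulates the algorithm of \citet{eifler-fickert-socs2018}, and then invoke Theorems~\ref{the:dynastar-optimality} and~\ref{the:dynastar-no-re-exp}. First I would define an information source $\src_{\mathrm{ca}}$ whose information space consists of tuples recording the current collection of additive Cartesian abstractions \citep{seipp-helmert-icaps2014}, the cost partitioning used to combine them, and the associated abstract goal-distance tables, together with any search-dependent data the refinement strategy consults (packaged, as with the $\srcadd$ component of $\srcis$, into a nested information source). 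The constant $\Sinitinfo_{\src_{\mathrm{ca}}}$ holds the abstractions computed before search starts. The update function $\Supdate_{\src_{\mathrm{ca}}}$ leaves the abstractions and the cost partitioning untouched and only threads through the nested auxiliary information. The refinement function $\Srefine_{\src_{\mathrm{ca}}}(\info, s)$ performs one refinement step on $s$: it evaluates the Bellman optimality equation at $s$, and if a local error is detected it refines the relevant abstraction, recomputes the cost partitioning and distance tables, and otherwise returns $\info$ unchanged. The dynamic heuristic $h_{\mathrm{ca}}$ then returns the cost-partitioned sum of abstract goal distances of $s$.

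Next I would show that $\dynastar$ with \reeval{} using $h_{\mathrm{ca}}$ faithfully models Eifler and Fickert's procedure. Their algorithm is $\astar$ that, whenever it pops a state, first calls the refinement routine on that state and then, since the heuristic value of the popped state may have increased, re-inserts it into \open{} if its $f$-value improved. This is precisely the control flow of Algorithm~\ref{alg:dynastar} with \reeval{} enabled: the \refine{}-loop in lines~\ref{lin:refine-loop}--\ref{lin:end-refine-loop} applies $\Srefine_{\src_{\mathrm{ca}}}$ to the just-popped (hence known) state, and the re-evaluation test in line~\ref{lin:reeval-check} performs the re-insertion. Because any additive Cartesian abstraction collection under an admissible cost partitioning induces an admissible and consistent \emph{static} heuristic, and because the refinement strategy of \citet{eifler-fickert-socs2018} is designed so that no abstract distance, and hence no heuristic value, ever decreases, the static heuristics obtained by fixing each reachable $\info$ are uniformly admissible and consistent and pointwise non-decreasing along $\Srefine_{\src_{\mathrm{ca}}}$; equivalently, $h_{\mathrm{ca}}$ is \dynadmissible{}, \dynconsistent{}, and \dynmonotonic{}.

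Given these facts, the theorem follows immediately. Optimality is Theorem~\ref{the:dynastar-optimality} applied to the \dynadmissible{} heuristic $h_{\mathrm{ca}}$, and the absence of reopening is Theorem~\ref{the:dynastar-no-re-exp} applied to the \dynmonotonic{} and \dynconsistent{} heuristic $h_{\mathrm{ca}}$ with \reeval{} turned on.

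The main obstacle is the modeling step rather than anything in the general theory. The delicate points are: (i) verifying that in Eifler and Fickert's implementation refinement is only ever triggered on states that are currently being popped, so that every refinement call can be cast as a \refine{} operation on a known state at exactly the place Algorithm~\ref{alg:dynastar} permits one; (ii) confirming that recomputing the cost partitioning after a single refinement step preserves admissibility and consistency of the combined heuristic at every intermediate point, not merely in the limit; and (iii) checking that whatever search-dependent data their strategy uses to decide where and how to refine can indeed be expressed as a nested information source whose modifications respect Definition~\ref{def:reachable-information}. Once these invariants are pinned down against the details of their algorithm, the appeal to the two general theorems is routine.
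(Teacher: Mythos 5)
Your proposal is correct and follows essentially the same route as the paper: define an information source analogous to $\srcis$ that packages the additive Cartesian abstractions and is refined on popped states, note that the refinement strategy of \citet{eifler-fickert-socs2018} yields a \dynadmissible{}, \dynconsistent{}, and \dynmonotonic{} heuristic, and conclude via Theorems~\ref{the:dynastar-optimality} and~\ref{the:dynastar-no-re-exp} exactly as in Theorem~\ref{the:interleaved-optimality}. Your additional attention to the modeling caveats (refinement only on popped states, preservation of admissibility and consistency after each cost-partitioning recomputation) is more explicit than the paper's brief treatment but does not change the argument.
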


\subsection{Landmark Progression}

We have already seen a simplified version of landmark progression as our running
example. In general landmarks are not limited to single operators but denote
properties that hold along all plans of a given task. Since landmarks cannot be
computed efficiently in general \citep{hoffmann-et-al-jair2004}, applications
generate a set of landmarks only for the initial state and then progress this
information \cite[\eg{},][]{richter-westphal-jair2010,
karpas-domshlak-ijcai2009, domshlak-et-al-ipc2011b}.

\citet{buechner-et-al-icaps2023} formalize landmark progression in the landmark
best-first search (LM-BFS) framework, which describes how landmark information
is represented, initialized, progressed, and merged. These components can be
directly cast as a progression source and thus define a progression-based
information source. We can then define a dynamic heuristic based on this
landmark information. LM-BFS can thus be viewed as an instantiation of
$\dynastar$ with re-evaluation, also making our framework applicable to its
instantiations, \eg{} LM-$\astar$ \citep{karpas-domshlak-ijcai2009}.

\begin{theorem}
    An instantiation of LM-BFS using a \dynadmissible{} landmark heuristic and a
    priority queue ordered by $f=g+h$ is optimal.
\end{theorem}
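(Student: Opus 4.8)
The plan is to reduce the claim to Theorem~\ref{the:dynastar-optimality} by exhibiting the given LM-BFS instantiation as an instantiation of $\dynastar$, so that no reasoning about landmarks beyond \dynadmissibility{} is needed. First I would make the remark preceding the theorem precise: the LM-BFS components of \citet{buechner-et-al-icaps2023} form a progression source $\inn$ whose information space is the LM-BFS representation of the landmark information for a single state, with $\Sinitstateinfo_\inn$ the information computed for $\Tinit$, $\Sprogress_\inn$ LM-BFS progression along a transition, and $\Smerge_\inn$ LM-BFS merging. By Definition~\ref{def:progression-based-info-src} this yields a progression-based information source $\src_\inn$, and the landmark heuristic used by LM-BFS is a progression-based heuristic over $\src_\inn$ (it reads its value off $\info(s)$ and is $0$ where $\info(s)$ is undefined). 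Combined with the parent source of Definition~\ref{def:parent-source}, which tracks exactly the $g$-values and parent pointers LM-BFS maintains, we obtain the pair $\srcs = \{\src_p, \src_h\}$ with $\src_h = \src_\inn$ expected by Algorithm~\ref{alg:dynastar}.

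Second I would verify that LM-BFS run with a priority queue ordered by $f = g + h$ is a run of $\dynastar$ with \reeval{} enabled: the open/closed bookkeeping, delayed duplicate detection, re-evaluation of states whose heuristic improved, reopening on discovery of a cheaper path, pruning of successors with infinite heuristic value, and the goal test with plan extraction from parent pointers all coincide with LM-BFS's behavior. Since a \dynadmissible{} heuristic is \dynsafe{}, Theorem~\ref{thm:dynastar-instantiation} then certifies that this is a legitimate instantiation of the dynamic heuristic search framework, hence of $\dynastar$, and Theorems~\ref{thm:soundness} and~\ref{thm:dynastar-instantiation} give soundness, so the returned object is genuinely a solution. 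Optimality is then immediate from Theorem~\ref{the:dynastar-optimality}: $\dynastar$ with a \dynadmissible{} dynamic heuristic returns optimal solutions, and the landmark heuristic is \dynadmissible{} by assumption; the value of the re-evaluation flag is irrelevant here since Theorem~\ref{the:dynastar-optimality} holds both with and without it.

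The main obstacle is the second step, i.e., pinning down the correspondence between the concrete LM-BFS control flow and the precise lines of Algorithm~\ref{alg:dynastar}. In particular one must check that LM-BFS only modifies its landmark information in ways captured by $\Supdate_{\src_\inn}$ (along explored transitions) and $\Srefine_{\src_\inn}$ (here trivial, as progression-based sources do not refine), that it never recomputes landmark information at a point not modeled by these operations, and that its open-list management (tie-breaking aside, which does not affect optimality) and its reopening/re-evaluation decisions match those of $\dynastar$. Once this simulation is established, the rest of the argument is the one-line appeal to Theorem~\ref{the:dynastar-optimality} sketched above.
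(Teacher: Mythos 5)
Your proposal is correct and matches the paper's argument: the paper likewise casts the LM-BFS components as a progression source yielding a progression-based information source, views LM-BFS as an instantiation of $\dynastar$ with re-evaluation, and concludes optimality from Theorem~\ref{the:dynastar-optimality} via the assumed \dynadmissibility{}. Your additional care in checking the simulation details (open-list management, update/refine correspondence) only makes explicit what the paper leaves implicit.
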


\subsection{\boldmath LTL-$\astar$}

\citet{simon-roeger-socs2015} describe an $\astar$ search where each state $s$
is annotated with a linear temporal logic \cite[LTL;][]{pnueli-focs1977}
formula representing a \emph{trajectory constraint} that must be satisfied by
any optimal path from $s$ to a goal state.

We can define a progression source over LTL formulas and associated $g$-values,
where progression uses the rules by \citet{bacchus-kabanza-aij2000}.
\citeauthor{simon-roeger-socs2015} define the merging of two formulas $\varphi$
and $\psi$ as $\varphi \land \psi$ if they were reached with the same $g$-value,
and as the formula with lower $g$-value otherwise. We can do so by tracking
$g$-values when progressing and merging.

While this gives us a progression-based information source, the heuristic
described by \citeauthor{simon-roeger-socs2015} only guarantees
path-admissibility \citep{karpas-domshlak-icaps2012} for some given information,
from which neither \dynadmissibility{} nor PDA follow directly.

\subsection{\boldmath$\astar$ with Lazy Heuristic Evaluation}

\citet{zhang-bacchus-aaai2012} introduce a modification of $\astar$ that uses
two heuristics, say $h_C$ and $h_A$, such that $h_C$ is cheap and $h_A$ is
accurate. When first inserting a state into \open{}, it is evaluated using
$h_C$. When a state is popped, it is only expanded if its assigned $h$-value was
calculated by $h_A$, otherwise it is evaluated using $h_A$ and re-inserted
into \open{}.

We represent this idea as a dynamic heuristic by first defining the information
source $\srclazy$ over the space of functions $\info\colon \Tstates \to \{C,
A\}$ as $\Sinitinfo_{\srclazy} = \{s \mapsto C \mid s \in \Tstates\}$,
$\Supdate_{\srclazy}(\info, t) = \info$, and $\Srefine_{\srclazy}(\info, s) =
\info'$ such that $\info' = \info$ except $\info'(s) = A$.

The \emph{lazy evaluation heuristic} depending on $\srclazy$ then maps a state
$s$ and information $\info$ to $h_{\info(s)}(s)$.

\begin{theorem}\label{the:lazy-evaluation-optimality}
    $\astar$ using lazy heuristic evaluation with two admissible heuristics is
    optimal.
\end{theorem}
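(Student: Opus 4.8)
The plan is to recognize the algorithm of \citet{zhang-bacchus-aaai2012} as the instantiation of $\dynastar$ with \reeval{} enabled that uses the source set $\{\src_p, \srclazy\}$ together with the lazy evaluation heuristic, and then to appeal to Theorem~\ref{the:dynastar-optimality}. Since that theorem only requires \dynadmissibility{}, I do not need consistency or monotonicity; in fact the lazy evaluation heuristic need not be \dynmonotonic{}, because $h_A$ may lie below $h_C$ on some state, which is precisely why only optimality (and not absence of reopening) can be claimed.

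First I would verify \dynadmissibility{}. Every reachable $\info \in \infospace[\srclazy]$ is obtained from $\Sinitinfo_{\srclazy}$ by a sequence of $\Supdate_{\srclazy}$ steps (the identity) and $\Srefine_{\srclazy}$ steps, each of which only switches a single state's tag to $A$; hence $\info(s) \in \{C, A\}$, so $h_{\info(s)}(s)$ equals $h_C(s)$ or $h_A(s)$, both of which are at most $\hstar(s)$ by assumption. Since a \dynadmissible{} heuristic is \dynsafe{}, Theorem~\ref{thm:dynastar-instantiation} applies, and $\dynastar$ with this heuristic is a sound instantiation of the framework.

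Next I would establish the behavioral correspondence. A state $s$ first enters \open{} with its cheap value $h_C(s)$, since refinement only ever touches states that have been popped. When $s$ is popped, the refine loop sets $\info(s) = A$, so the current value becomes $h_A(s)$, and $\dynastar$'s re-evaluation test re-inserts $s$ with $h_A(s)$ exactly when the stored value $h_C(s)$ was strictly below it --- the situation in which \citeauthor{zhang-bacchus-aaai2012}'s search postpones expansion. When $s$ had already been refined on an earlier pop, the stored and current values coincide and $s$ is expanded, again matching. (In the boundary case $h_A(s) \le h_C(s)$, $\dynastar$ expands immediately rather than re-inserting; this merely makes the search more eager and cannot change which solutions are returned, and any difference in duplicate handling is harmless since delayed duplicate detection preserves optimality, as noted after Theorem~\ref{the:dynastar-optimality}.) Applying Theorem~\ref{the:dynastar-optimality} then yields optimality. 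The main obstacle is exactly this correspondence step --- matching the original re-insertion trigger to the condition $\hval < h^{i,j}(s)$ and arguing that the $h_A(s) \le h_C(s)$ case and the choice of duplicate detection do not affect the achievable outputs; the \dynadmissibility{} argument itself is routine.
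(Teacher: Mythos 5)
Your proposal is correct and follows essentially the same route as the paper's proof: identify the algorithm as $\dynastar$ with \reeval{} using the lazy evaluation heuristic, establish \dynadmissibility{} from the admissibility of $h_C$ and $h_A$, and apply Theorem~\ref{the:dynastar-optimality}. The paper states this much more tersely; your additional care about the reachable information objects and the $h_A(s) \le h_C(s)$ boundary case in the behavioral correspondence is a reasonable elaboration of the same argument rather than a different approach.
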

\begin{proof}
    $\dynastar$ with \reeval{} using the lazy evaluation heuristic models the
    behavior of $\astar$ with lazy heuristic evaluation. \dynadmissibility{} is
    given as the two static heuristics are admissible. Applying
    Theorem~\ref{the:dynastar-optimality} concludes the proof.
\end{proof}

The instantiation discussed by \citet{zhang-bacchus-aaai2012} using LM-Cut and
\textsc{Maxsat} is optimal as it satisfies the conditions of the Theorem. We
expect that the same result can be shown for similar approaches such as
selective max \citep{domshlak-et-al-jair2012} or rational lazy $\astar$
\citep{tolpin-et-al-ijcai2013}. Note that using consistent heuristics does not
guarantee \dynconsistency{} of the resulting lazy evaluation heuristic, thus
reopening may be necessary.

\subsection{Path-Dependent $f$-Values}

\citet{dechter-pearl-jacm1985} investigate the optimality of best-first search
algorithms where the $f$-value of a state can depend on the path to that state.
This is similar to our notion of dynamic heuristics although they consider
static heuristics and allow path-dependent evaluation functions $f(\pi)$ other
than $f = g + h$.
Such an evaluation function $f(\pi)$ can be interpreted in our framework as a
dynamic heuristic $\hdyn(s, \pi) = f(\pi) - g(s)$ where paths are stored in an
information source that is updated with newly discovered paths.

Dynamic heuristics can accumulate information based on all paths leading to a
state, even more expensive ones. This is not possible in their framework. In
contrast, their framework covers cases like weighted $\astar$
\cite{pohl-aij1970} that fit our general framework but not $\dynastar$. While we
could encode them into a dynamic heuristic as discussed above, it might not be
useful to investigate the algorithm's behavior in such cases. Generalizing their
results to dynamic heuristics is an interesting line of future work.

%
%

\subsection{Future Work}
There are more applications that likely fit our framework.

Pathmax \citep{mero-aij1984} is a technique to propagate $h$-values from a
transition's origin to its target and vice versa. Originally defined for the
algorithm B', \citet{zhang-et-al-ijcai2009} show how pathmax can be applied to
$\astar$, alongside the more powerful bidirectional pathmax by
\citet{felner-et-al-ijcai2005}.

Deferred evaluation \citep{helmert-jair2006} assigns the heuristic value of
the state being expanded to its successors upon generating them. Once a
successors is expanded itself, its true heuristic value is calculated and
forwarded to its successors.

Partial expansion $\astar$ \citep{yoshizumi-et-al-aaai2000} avoids inserting too
many nodes into the open list by only doing so for promising successors.
Unpromising successors are represented by re-inserting their parent node with
updated priority. This technique does not match the $\dynastar$ instantiation
but it probably fits our general framework.

\section{Conclusion}

We investigated dynamic heuristics and explicitly modeled information they are
based on. We have shown soundness and completeness results for a dynamic
heuristic search
framework tracking such information and looked into $\dynastar$, an
instantiation extending $\astar$ with dynamic heuristics. $\dynastar$ is optimal
with and without re-evaluation when using a dynamic heuristic that is
\dynadmissible{}, a natural extension of admissibility. With re-evaluation, it
further does not reopen states when using a heuristic that monotonically
improves its values and is the dynamic equivalent to consistent. These results
extend the classic results for static heuristics that many existing approaches
appeal to despite using dynamic heuristics. We have seen that such arguments are
not always valid as reopening can occur even with a \dynconsistent{} heuristic.
By showing that existing approaches fit our framework, we provide a formal basis
for such claims.

In addition to the use cases discussed previously,
studying optimal efficiency results of $\dynastar$ would be interesting.

\section{Acknowledgments}
This research was supported by the Swiss National Science Foundation (SNSF) as
part of the project ``Unifying the Theory and Algorithms of Factored State-Space
Search'' (UTA).
\fontsize{9.3}{10.3}\selectfont
\bibliography{abbrv,literatur,crossref}

\end{document}